\newtheorem{proposition}{Proposition}
\newtheorem{theorem}{Theorem}
\begin{document}
\setlength{\textfloatsep}{-1pt}
% paper title
% Titles are generally capitalized except for words such as a, an, and, as,
% at, but, by, for, in, nor, of, on, or, the, to and up, which are usually
% not capitalized unless they are the first or last word of the title.
% Linebreaks \\ can be used within to get better formatting as desired.
% Do not put math or special symbols in the title.
% \title{DRESS: Debias Recommendation under Exposure Strategies and Social Networks}
%\title{Debiased Social Recommendation via Exposure Strategies Reconstruction}
% \title{Exposure Strategies Reconstruction for Debiased Social Recommendation}
\title{REST: Debiased Social Recommendation via Reconstructing Exposure Strategies}

%\title{social recommendation via discrete unobservational exposure strategies}
%\title{Social Recommendation under Discrete Unobserved Exposure Strategy}

%
%
% author names and IEEE memberships
% note positions of commas and nonbreaking spaces ( ~ ) LaTeX will not break
% a structure at a ~ so this keeps an author's name from being broken across
% two lines.
% use \thanks{} to gain access to the first footnote area
% a separate \thanks must be used for each paragraph as LaTeX2e's \thanks
% was not built to handle multiple paragraphs
%
%
%\IEEEcompsocitemizethanks is a special \thanks that produces the bulleted
% lists the Computer Society journals use for "first footnote" author
% affiliations. Use \IEEEcompsocthanksitem which works much like \item
% for each affiliation group. When not in compsoc mode,
% \IEEEcompsocitemizethanks becomes like \thanks and
% \IEEEcompsocthanksitem becomes a line break with idention. This
% facilitates dual compilation, although admittedly the differences in the
% desired content of \author between the different types of papers makes a
% one-size-fits-all approach a daunting prospect. For instance, compsoc 
% journal papers have the author affiliations above the "Manuscript
% received ..."  text while in non-compsoc journals this is reversed. Sigh.

\author{Ruichu Cai$^\star$,~\IEEEmembership{Member,~IEEE,}
	Fengzhu Wu, Zijian Li, Jie Qiao, Wei Chen, Yuexing Hao, Hao Gu% <-this % stops a space
	\IEEEcompsocitemizethanks{
		\IEEEcompsocthanksitem Ruichu Cai is with the School of Computer Science, Guangdong University of Technology, Guangzhou, China, 510006 and Peng Cheng Laboratory, Shenzhen, China, 518066.
		E-mail: cairuichu@gmail.com
		\IEEEcompsocthanksitem Fengzhu Wu is with the School of Computer, Guangdong University of Technology, Guangzhou China, 510006.
		E-mail: fzwu97@gmail.com
		\IEEEcompsocthanksitem Zijian Li is with the School of Computer, Guangdong University of Technology, Guangzhou China, 510006.
		E-mail: leizigin@gmail.com
		\IEEEcompsocthanksitem Jie Qiao is with the School of Computer Science, Guangdong University of Technology, Guangzhou, China, 510006. E-mail: qiaojie.chn@qq.com
		\IEEEcompsocthanksitem Wei Chen is with the School of Computer Science, Guangdong University of Technology, Guangzhou, China, 510006 and Peng Cheng Laboratory, Shenzhen, China, 518066. E-mail: chenweidelight@gmail.com
		\IEEEcompsocthanksitem Yuexing Hao is with Tufts University. E-mail: yhao02@tufts.edu
		\IEEEcompsocthanksitem Hao Gu is with Tencent Technology (SZ) Co., Ltd. E-mail: nickgu@tencent.com
		}% <-this % stops an unwanted space
	\thanks{Manuscript received XX; revised XX; accepted XX. Date of publication XX XX, 2019; date of current version XX XX, 2019. This research was supported in part by National Key R\&D Program of China (2021ZD0111501), National Science Fund for Excellent Young Scholars (6212200101) and Natural Science Foundation of China (61876043, 61976052). Wei Chen was supported by China Postdoctoral Science Foundation (2021M690734). (*Ruichu Cai is the Corresponding author.)
	}
}

% note the % following the last \IEEEmembership and also \thanks - 
% these prevent an unwanted space from occurring between the last author name
% and the end of the author line. i.e., if you had this:
% 
% \author{....lastname \thanks{...} \thanks{...} }
%                     ^------------^------------^----Do not want these spaces!
%
% a space would be appended to the last name and could cause every name on that
% line to be shifted left slightly. This is one of those "LaTeX things". For
% instance, "\textbf{A} \textbf{B}" will typeset as "A B" not "AB". To get
% "AB" then you have to do: "\textbf{A}\textbf{B}"
% \thanks is no different in this regard, so shield the last } of each \thanks
% that ends a line with a % and do not let a space in before the next \thanks.
% Spaces after \IEEEmembership other than the last one are OK (and needed) as
% you are supposed to have spaces between the names. For what it is worth,
% this is a minor point as most people would not even notice if the said evil
% space somehow managed to creep in.

% The paper headers
\markboth{IEEE Transactions on Neural Networks and Learning Systems,~submitted}%
% \markboth{}%
{Ruichu Cai \MakeLowercase{\textit{et al.}}: REST: Debiased Social Recommendation via Reconstructing Exposure Strategies}
% The only time the second header will appear is for the odd numbered pages
% after the title page when using the twoside option.
% 
% *** Note that you probably will NOT want to include the author's ***
% *** name in the headers of peer review papers.                   ***
% You can use \ifCLASSOPTIONpeerreview for conditional compilation here if
% you desire.

% The publisher's ID mark at the bottom of the page is less important with
% Computer Society journal papers as those publications place the marks
% outside of the main text columns and, therefore, unlike regular IEEE
% journals, the available text space is not reduced by their presence.
% If you want to put a publisher's ID mark on the page you can do it like
% this:
%\IEEEpubid{0000--0000/00\$00.00~\copyright~2015 IEEE}
% or like this to get the Computer Society new two part style.
%\IEEEpubid{\makebox[\columnwidth]{\hfill 0000--0000/00/\$00.00~\copyright~2015 IEEE}%
%\hspace{\columnsep}\makebox[\columnwidth]{Published by the IEEE Computer Society\hfill}}
% Remember, if you use this you must call \IEEEpubidadjcol in the second
% column for its text to clear the IEEEpubid mark (Computer Society jorunal
% papers don't need this extra clearance.)

% use for special paper notices
%\IEEEspecialpapernotice{(Invited Paper)}

% for Computer Society papers, we must declare the abstract and index terms
% PRIOR to the title within the \IEEEtitleabstract  indextext IEEEtran
% command as these need to go into the title area created by \maketitle.
% As a general rule, do not put math, special symbols or citations
% in the abstract or keywords.
\IEEEtitleabstractindextext{%
\begin{abstract}
The recommendation system, relying on historical observational data to model the complex relationships among the users and items, has achieved great success in real-world applications. Selection bias is one of the most important issues of the existing observational data based approaches, which is actually caused by \textcolor{black}{multiple types of} unobserved exposure strategies (e.g. promotions and \textcolor{black}{holiday effects}). Though various methods
have been proposed to address this problem, they are mainly relying on the implicit debiasing techniques but not explicitly modeling the unobserved exposure strategies.
By explicitly \textbf{R}econstructing \textbf{E}xposure \textbf{ST}rategies (REST in short), we formalize the recommendation problem as the counterfactual reasoning and propose the debiased social recommendation method. In \textbf{REST}, we assume that the exposure of an item is controlled by the latent exposure strategies, the user, and the item. Based on the above generation process, we first provide the theoretical guarantee of our method via identification analysis. Second, we employ a variational auto-encoder to reconstruct the latent exposure strategies, with the help of the social networks and the items. Third, we devise a counterfactual reasoning based recommendation algorithm by leveraging the recovered exposure strategies. Experiments on four real-world datasets, including three published datasets and one private WeChat Official Account dataset, demonstrate significant improvements over several state-of-the-art methods. 
\end{abstract}
% \begin{IEEEkeywords}
% {Recommendation System, Social Recommendation System, Causal Effect, Variational Auto-Encoder}
% \end{IEEEkeywords}}
\begin{IEEEkeywords}
Recommendation System, Social Recommendation System, Causal Effect, Variational Auto-Encoder
\end{IEEEkeywords}}

% make the title area
\maketitle
% To allow for easy dual compilation without having to reenter the
% abstract/keywords data, the \IEEEtitleabstractindextext text will
% not be used in maketitle, but will appear (i.e., to be "transported")
% here as \IEEEdisplaynontitleabstractindextext when the compsoc 
% or transmag modes are not selected <OR> if conference mode is selected 
% - because all conference papers position the abstract like regular
% papers do.
\IEEEdisplaynontitleabstractindextext
\IEEEpeerreviewmaketitle

% \IEEEraisesectionheading{}
\section{Introduction}
\begin{figure*}[tb]
		\includegraphics[width=\textwidth]{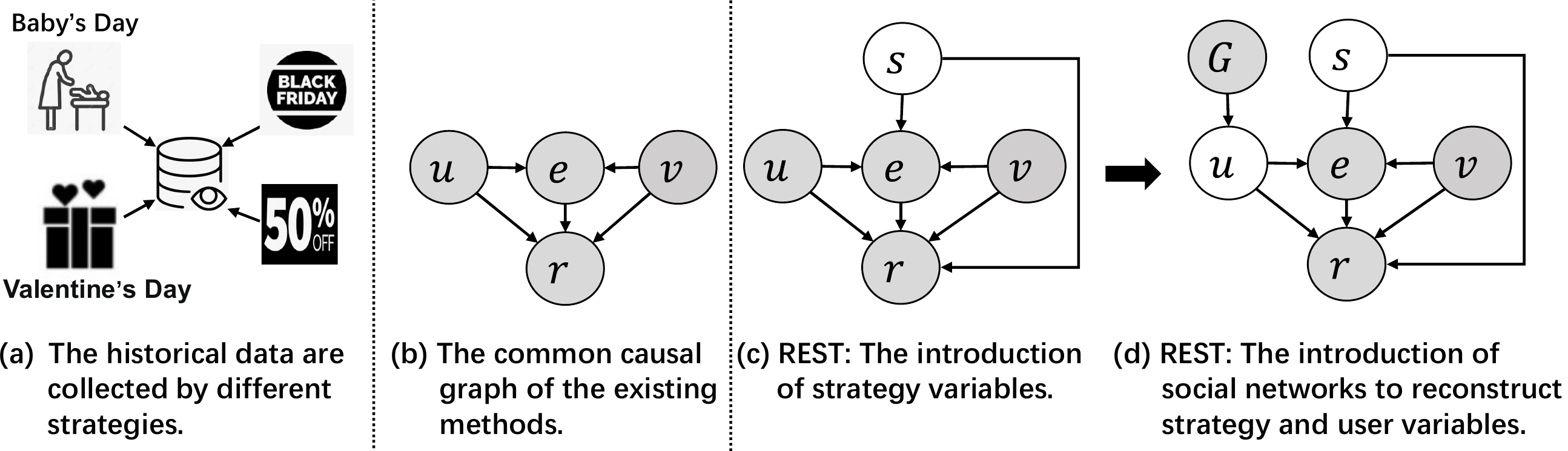}
		\caption{(a) The historical data are collected under different strategies, such as the promotion in Black Friday and the Valentine's Day Gift Shop. (b) The common causal graph of the existing methods, in which implicitly leverage the stationary strategy assumption. \textcolor{black}{(c)(d) are the causal generation process of the proposed method. (c) The causal graph that takes the latent strategy variables into account. (d) In order to address the counterfactual problem in recommendation, we bring the social networks into the causal generation process, where the user variables are latent due to the complexity of the aggregation process of social networks.} }
		\label{fig:motivation}
	\end{figure*}
	Recommendation system \cite{zhang2019deep,zhang2020explainable,1167344,covington2016deep,10.1145/3178876.3186066} is an important techniques in the world. It has been used for a wide range of applications such as e-commerce \cite{ma2019learning,cen2020controllable}, search engines \cite{tennenholtz2019rethinking,baeza2004query} and e-resource services platforms \cite{bouraga2014knowledge}. Recently, the data driven recommendation systems, which use historical data to model the complex relationships among users and items, have achieved a huge success and become mainstream \cite{koren2009matrix,mnih2007probabilistic,he2017neural,jamali2010matrix}.
	
	Selection bias is one of the key issues to the success of the data driven recommendation systems \cite{steck2011item,wang2020click,chen2020bias}. Because the historical data are collected under multiple types of exposure strategies and seriously biased. Give an example in the information flow application, on the one hand, the items recommended by the system will get a high chance to expose and further results in the bias of the collected historical data; on the other hand, users usually prefer to watch and rate the popular videos/news such that the recommended strategies will capture the trending videos/news and further increase the exposures. Such bias of the historical data will lead to overestimate or underestimate the performance of the trained recommendation systems, harm the performance of the deployed recommendation systems and even result in the well-known negative phenomenon named \textit{Information Cocoons} \cite{song2021similar}.
	
	To tackle the aforementioned selection bias problem, many researchers raise several debiased recommendation algorithms \cite{ovaisi2020correcting,schnabel2016recommendations,wang2016learning}. One of the mainstream approaches is the inverse propensity weighting (IPW) based methods \cite{liang2016causal}, e.g., the Empirical Risk Minimization framework \cite{schnabel2016recommendations} and the CausE method \cite{bonner2018causal}. \textcolor{black}{Recently, Feng et.al \cite{wang2021deconfounded,wei2020model,zhang2021causal} employ the concept of causal effect to address the selection bias challenge. }
	By viewing the exposed items ($v$) behind the dataset as the common cause to the exposure ($e$) and the rating level ($r$), we can rephrase the existing methods into the causal graph in Figure \ref{fig:motivation}(b). This figure further explicitly models the user preference, i.e., the user preference will increase the exposure due to the recommended strategies and the rating level based on the user preference.
	
     \textcolor{black}{According to the aforementioned debiased methods}, we can easily find that the core of them is to remove the effect of exposure and to estimate the outcome of \textcolor{black}{rate level if item $v$ is exposed to user $u$} . However, in a large range of historical data, different exposure strategies could change dramatically because the recommendation systems always try to capture the popular trending items or the changing user preference. 
    %  For example, infant products will become much more popular on baby's day due to the sales promotions of infant products on that day \textcolor{black}{and return to normal on the Valentine's Day}. Moreover, the strategies are usually categorical, since the data are collected in a long term over the evolving environments, e.g., the social events and promotions. 
     Take Figure \ref{fig:motivation}(a) as an example, a dataset is a collection of observations over a number of different \textcolor{black}{promotions strategies}, such as Baby's Day, Valentine's Day, Black Friday, and seasonal offer. Such \textcolor{black}{several types of strategies} would render the existing debiased methods fail as they assume a stationary exposure strategy. Furthermore, the strategies are usually independent of the user preference and the item properties, but ignoring the changing strategies will entangle the information of strategies with the users and items, which further leads to bias results.
    %  \textcolor{black}{Unfortunately, the existing methods which can be summarized into Figure \ref{fig:motivation}(b) do not take the different strategies into account and implicitly employ the stationary exposure strategy assumption, and finally result in the suboptimal performance due to the wrong reweighting.} 
     Please note that the stationary exposure strategy assumption can be equivalently viewed as the stationary propensity score over time. \textcolor{black}{For example, applying the stationary exposure strategy assumption for reweighting-based methods equals providing the same sample weights for the flowers on Valentine's Day and Baby's Day, which is obviously unreasonable. In order to address this challenge, one straightforward solution is to take the exposure strategies into account. Hence we can obtain the revised causal graph shown in Figure \ref{fig:motivation}(c), in which strategies not only affect the exposure variables but also the rate level.} 
    %  Therefore, to design a useful debiased recommendation system, two important factors should be taken into consideration. The first is to model the discrete strategies variables explicitly. The second is to disentangle the information of strategy from users and items \cite{cai2019learning, locatello2019challenging} \textcolor{black}{for the purpose of debiasing}.
% 	Though the aforementioned methods have achieved some promising results, they stop at the de-bias of the binary variable, exposure or not.
	
% 	However, the data are much more complex than the case of binary selection bias, due to that the data are collected in a long term over the evolving environments, e.g., the social events and the promotions. Take Fig \ref{fig:motivation} as an example, the dataset includes the historical results with different periods, including Baby's Day, Valentine's Day, Black Friday, and seasonal offer. These non-stationary property of the data far more than the binary bias. For example, during Valentines’ Day, the gifts are usually with high property to exposure. Only debias may lost the important information of the exposure strategies and harm the effectiveness of the recommendation results.   
	
	\textcolor{black}{Given the causal graph shown in Figure \ref{fig:motivation}(c), the recommendation task can be seen as a counterfactual question that \textit{What the rate level $r$ would be if an item $v$ is exposed to a user $u$ under strategy $s$?} This question is hard to answer since we can only obtain the rate level from the exposed dataset. This difficulty can be solved if we can find a similar user who has given the rate level for the same item. But how to find such a similar user is another nontrivial task with the assumption that each user is independent unless the social networks (or local neighbors) are taken into consideration. So we further propose another revised causal graph as shown in Figure \ref{fig:motivation}(d). It is noted that we let the user variables be latent when taking the social networks into account. This is because the interests of users are influenced by their neighbors, so the user variables become an aggregation of neighbors' information and are too complex to be explicitly described. 
% 	By viewing the recommendations from the causal aspect, we find that both the items and the social networks provide some useful clues about the unobserved exposure strategies. In detail, as the causal model given in Figure \ref{fig:motivation} (c), in order to reconstruct the strategy $s$, one needs to inference from its effect $e$ (exposure variables). As a result, given the exposure variables $e$, the strategies variables $s$ is dependent on both the users $u$ and the items $v$. Thus, the user embedding, as well as the item embedding are necessary for reconstructing the latent strategy variables. However, the user embedding is latent variables. Fortunately, with the help of social networks, the user embedding is recoverable which further helps reconstruct the strategy variables.
	}
	
% 	However, modeling these exposure strategies is still a non-trivial task, \textcolor{black}{because both the strategies and the user preference} are usually not observable in practice. (\textcolor{black}{User preference is usually unobserved since only the user id is provided.}) Fortunately, by viewing the recommendations from the causal aspect, we find that both the items and the social networks provide some useful clues about the unobserved exposure strategies. In detail, as the causal model given in Figure \ref{fig:motivation} (c), in order to reconstruct the strategy $s$, one needs to inference from its effect $e$ (exposure variables). As a result, given the exposure variables $e$, the strategies variables $s$ is dependent on both the users $u$ and the items $v$. Thus, the user embedding, as well as the item embedding are necessary for reconstructing the latent strategy variables. However, the user embedding is latent variables. Fortunately, with the help of social networks, the user embedding is recoverable which further helps reconstruct the strategy variables.
% 	Specifically, as shown in Figure \ref{fig:motivation} (b), given $e_{ij}$ , $s_{ij}$ and $G_i$ (similarly for $v_j$ ) forms a $V$-structure \cite{glymour2019review}, which results in the conditional dependency between $s_{ij}$ and $G_i$ given the common effect $e_{ij}$. For example, if the exposure results $e_{ij}=0$ and $G_i$ implies $u_i$ is a young mother, then the exposure strategy may not be “Baby's Day”. 
	
	Based on the causal graph shown in Figure \ref{fig:motivation}(d), we provide a practical approach for debiased recommendation \textbf{R}econstructing \textbf{E}xposure \textbf{ST}rategies (REST in short.) by modeling different strategies behind observed data. First, we assume that the data generative process of recommendation follows the causal graph shown in Figure \ref{fig:motivation}(d). \textcolor{black}{Second, We summarize the problem of the recommendation systems as the counterfactual reasoning problem and provide the identification analysis for theoretical guarantee.} Third, based on this causal generative process, we devise a variational-based 
% 	causal-effect inference 
	\textcolor{black}{counterfactual reasoning }method to successively reconstruct the user latent variables and the exposure strategy latent variables. Extensive experimental studies demonstrate that the proposed \textbf{REST} method outperforms the state-of-the-art recommendation methods (\textcolor{black}{including the latest methods based on causal effect.}) on three published datasets and one real-world WeChat official accounts dataset.
	
	The rest of the paper is organized as follows. Section \ref{sec:related} reviews existing studies on recommendation systems, including social recommendation systems,  causality-based recommendation systems as well as recommendation systems using a generative model. 
    % \textcolor{black}{Section \ref{sec:notation} provides the notation description and how the social recommendation can be seen as a problem of counterfactual reasoning.
    \textcolor{black}{In section \ref{sec:model}, we expound the causal generation process under latent strategies and social networks. }We also elaborate on the details about how to model the aforementioned causal generation process and how to implement the proposed model in section \ref{sec:imple}. 
    Section \ref{sec:exp} presents the experiment results on four real-world datasets, including ablation analysis and the visualization. Section \ref{sec:conclu} concludes the paper.

\section{Related Works} \label{sec:related}
	Our work is closely related to the recommendation systems in causal view, the social recommendation systems and the recommendation systems that are related to generative models. In this section, we review the works on these three kinds of recommendation systems.
	
	In order to address the problem of selection bias, many researchers borrow the ideas of causal inference \cite{louizos2017causal,liang2016causal}. Sharma et al. \cite{sharma2015estimating} estimate the causal effect of recommendation system from observed data. 
	Schnabel et al. \cite{schnabel2016recommendations} estimate the quality of a recommendation system with the help of the propensity-weighting method which is commonly used in causal inference. 
	Aiming to learn to rank with biased data with click propensities, Ai et al. \cite{ai2018unbiased} propose the Dual Learning Algorithm that combines an unbiased ranker and an unbiased propensity model. 
	Bonner et al. \cite{bonner2018causal} propose the CausE that is optimized with biased logged data and predicts recommendation results under random exposure. 
	Considering that the missing rating in a recommendation system is usually missing not at random, Wang et al. \cite{wang2019doubly} propose a doubly robust estimator for recommendation and further derive the tail bound of the estimator. 
	Recently, Wang et al. \cite{wang2020information} take the unexposed user-item pairs as the counterfactual samples, and propose the counterfactual variational information bottleneck. 
	Motivated from the counterfactual propensity-weighting approach from causal inference, Xu et al. \cite{xu2020adversarial} address the unbiassed recommendation problem by using a minimax empirical risk formulation. However, the aforementioned methods ignore that the historical logged data are collected under different strategies and these strategies are the reasons that lead to selection biases. \textcolor{black}{Moreover, the aforementioned methods implicitly assume that the exposure strategies are stationary and this assumption is usually too strong.} In this paper, we address the selection biases problems in the recommendation by modeling the exposure strategies by combining the social networks with the causal generative process of rate level.
	
	For the social recommendation, one of the goals of recommendation is to learn a better user variables, hence more and more researchers leverage the relationships among users with the consideration of the homophily in the social network. Jamali et al. 
	\cite{jamali2010matrix} combine matrix factorization with the mechanism of trust propagation of social networks in order to address the problems brought by the cold-start users. 
	Following the intuition that personal behaviors are affected by a person's social network, Ma et al. \cite{ma2008sorec} propose SoRec, which learns the user latent feature space and item latent feature space by employing the social networks and the user-item matrix simultaneously. 
	In order to address the data sparsity and cold-start problem, Yang et al. \cite{yang2017social} propose TrustMF, which employs matrix factorization technique to map users into low-dimensional latent feature spaces in terms of their trust relationship.
	With the widespread use of deep learning, many researchers make use of neural networks to improve recommendation algorithm. Considering that the current recommendation largely relies on the initialization of the user and item latent feature vectors, Deng et al. \cite{deng2016deep} use deep learning to determinate the initialization in the matrix factorization for the social recommendation. 
	Considering that the users behave and interact differently in social networks and user-item bipartite graphs, Fan et al. \cite{fan2019deep} raise DASO, which adopts a bidirectional mapping method to transfer users’ information between social domain and item domain. \textcolor{black}{In this paper, since both the user variables and the strategies variables are latent, it is hard to reconstruct them at the same time. Hence we introduce the social networks to reconstruct the user embedding first, then leverage it to reconstruct the strategies variables.}
	
	Other researchers borrow the idea of generative models.
	Zhou et al. \cite{zhou2020recommendation} extend the flow-based generative model \cite{papamakarios2019normalizing} to CF for modeling implicit feedback. And Liang et al. \cite{liang2018variational} combine multinomial likelihoods with collaborative filtering and extend variational auto-encoders \cite{kingma2013auto} to collaborative filtering for implicit feedback. 
	Liu et al. \cite{liu2020deep} consider both local and global structures among users under the Wasserstein auto-encoder frameworks. 
	Recently, graph neural networks attract more and more attention, so some researchers combine the generative models and the graph neural networks. 
	Yu et al. \cite{yu2020enhance} propose a deep adversarial framework based on graph convolutional networks to address the problem of the sparsity of user-item relation and the noisy social relations. 
% 	Fan et al. \cite{fan2019graph} use the graph attention networks to respectively aggregate the information from the social networks and user-item bipartite graph. However, the aforementioned methods do not consider the disadvantages brought by the selection bias, which may lead to information cocoons.
	In this work, \textcolor{black}{we bring the strategies variables into the structural causal model and tackle the recommendation problem as a counterfactual problem. We follow the paradigm of variational auto-encoders \cite{kingma2013auto} to instantiate the proposed REST method}. 
	
\begin{table}
	\centering
	\caption{Notations and Descriptions.}
	\begin{tabular}{c|c}
	\hline
		\toprule
		\small{Notations}  & Descriptions \\
		%\hline
		\midrule
		$u$ & The user variables as well as the user variables. \\
		\hline
		$v$&  The item variables as well as the item variables. \\
		\hline
		$e,r$&  The exposure variables and the rate level variables. \\
		\hline
		$s$&  The strategy variables. \\
		\hline
		$\mathcal{U},\mathcal{V}$            & The user set and the item set.\\
		\hline
		$\bm{E},\bm{R}$            &  The exposure matrix and rate level matrix\\
		\hline
		$\bm{h}$            &  The features extracted by the models\\
		\hline
% 		$m,n$            & The size of user set and item set. \\
% 		\hline
		$\mathcal{T}, \mathcal{O}$  & The exposed set and the unexposed set.\\
		\hline
		$G$  & The social network over $U$.\\
		\hline
		$P(\cdot), Q(\cdot)$  & The probability distribution of a random variables.\\
		\hline
% 		$\bm{p}, \bm{q}$  & The embedding matrix of user and item.\\
% 		\hline
% 		$\phi_u(\cdot)$  & The embedding lookup function of variable u.\\
% 		\hline
		$\mathbf{W}_*, \bm{\Theta}_*$ & The parameters of neural networks. \\
		\hline
% 		$d$              & The dimension number of features. \\
% 		\hline
% 		$\oplus$         & The concatenation operator of any two vectors. \\
% 		\hline
		%$N(u_i)$    & The 1-order neighbourhoods of user $u_i$ \\
		%\hline
		$C(u)$    & The accessed items of user $u$. \\
		\hline
		$N(u)$    & The 1st-order neighbors of $u$. \\
		\hline
		$f(\cdot),g(\cdot), \phi(\cdot)$     & The neural networks based function.  \\
		\hline
		$\mathcal{F}_{u}$               & \makecell[c]{The $\beta$-frequence neighbors item set, containing \\ the items that have been accessed by \\ at least $\beta$ neighbors of $u$.} \\
		%\hline
		\bottomrule
	\end{tabular}
	\label{tab:notation}
	\vspace{2.0em}
\end{table}

\textcolor{black}{\section{Identification of Debiased  recommendation}\label{sec:model}
% Based on the aforementioned problem definition,  we first elaborate the causal generation process under exposure strategies and social networks. Then we provide the identification theorem of social recommendation.
}
\subsection{Notations}
\textcolor{black}{We first introduce the notations in this paper. Let $\mathcal{U}$ and $\mathcal{V}$ denote the sets of users and items respectively. We further let $\bm{E}$ and $\bm{R}$ denote the exposure matrix and the rate level matrix defined over $\mathcal{U}\cup \mathcal{V}$. $e_{uv}$ is an element of $\bm{E}$, with $e_{uv}=1$ denotes that the item $v$ is exposed to the user $u$ and $e_{uv}=0$ denotes that the item $v$ is not exposed to the $u$.  $r_{uv}$ is an element of $\bm{R}$, which denotes the rate level of $u$ on $v$. Hence we let $\mathcal{T}=\{<u, v, r_{uv}>|e_{uv} = 1\}$ and $\mathcal{O}=\{<u, v, r_{uv}>|e_{uv}=0\}$ be the exposed set and unexposed set respectively. In the social recommendation context, a social networks $G$ is associated with the user set $\mathcal{U}$. 
	With the abuse of notation, we also let $u$, $v$ be the embedding of the corresponding entities and ignore the subscripts of $e_{uv}$ and $r_{uv}$.
	The mathematical notations used in this paper are summarized in Table \ref{tab:notation}. }

\textcolor{black}{\subsection{Causal Generation Process under Exposure Strategies and Social Networks}
Based on the aforementioned notation description, we consider the causal graph to model the recommendation procedure.  As shown in Figure \ref{fig:motivation}(d), the causal graph  contains six variables: $G, u, e, s, v$, and $r$. In particular, we let:
\begin{itemize}
    \item $G\rightarrow u$ denotes how the social networks affect the interests of users.
    \item $u,v,s\rightarrow e$ denotes that whether an item will be recommended depends on $u, v$ and $s$.
    \item $u,e,v,s\rightarrow r$ denotes that the exposure of item $v$ to user $u$ not only depends on $u$ and $v$ but also depends on the exposure strategies $s$.
\end{itemize}}

\textcolor{black}{
Please note that our causal model (Figure \ref{fig:motivation}(d)) is different from the existing debiased method (Figure \ref{fig:motivation}(b)) from the following two aspects: 1) our model further takes the $s$ into consideration. 2) our model takes $u$ as latent variables and employs $G$ as the surrogate of $u$. This causal mechanism provides us a way to infer the latent variables $s$, because $G$ (similarly for $v$) and $s$ are dependent on each other conditioning on $e$. In other words, $G$ and $v$ provide us the clues to infer the latent exposure strategies. 
}

\textcolor{black}{\subsection{Social Recommendation as a Problem of Counterfactual Reasoning}
	Based on the aforementioned descriptions, we provide the definition of the social recommendation.}
	
	\textcolor{black}{We first let $\mathcal{T}'$ be the training set extracted from the exposed set, e.g., $\mathcal{T}'\subseteq \mathcal{T}$. Given the social networks $G$, the training set $\mathcal{T}'$ and the strategy variables $s$, the goal of the social recommendation is to obtain a model that can estimate the following conditional distribution:
	}
	\begin{equation}
	\setlength{\abovedisplayskip}{7pt}
    \setlength{\belowdisplayskip}{7pt}
	\label{defin}
	    P(r|G,u,v,s,do(e=1)),
	\end{equation}
	\textcolor{black}{in which $<u, v, r>$ is extracted from the unexposed set, e.g. $<u, v, r> \in O$. Note that the sample $<u, v, r>$ is extracted from the unexposed set but given $e=1$, meaning that a user $u$ has never been exposed to an item $v$. And estimating the aforementioned conditional distribution equals to answer the following question: \textit{What the rate level $r$ would be if an item $v$ is exposed to a user $u$ given exposure strategy $s$ and social networks $G$?} Therefore, according to the theory of counterfactual inference \cite{pearl2009causality}, we can find that designing a social recommendation system is a counterfactual problem.}

\subsection{Identifying Unbiased Prediction of Social Recommendation System}

	Following the causal view of the recommendation systems, the goal of our social recommendation is to estimate the conditional distribution $P(r|G,u,v,do(e=1))$ according to the \textit{do}-calculus \cite{pearl2009causality}. The identification of such a counterfactual model is an immediate result of Pearl's back-door criteria, as shown in Theorem \ref{the1}.
\begin{theorem}
\label{the1}
\textbf{(Identification of Social Recommendation)} Suppose that the joint distribution $P\left(G, e, r, v, s\right)$ is recovered, the counterfactual prediction is identifiable under the causal model in Figure 1(d).
\begin{proof}
\textcolor{black}{We prove that $ P(r |G ,v ,do(e\!=\!1))$ is identifiable under the premise of the theorem with the help of Equation (\ref{equ1}).}
\begin{equation}
% \small
% \begin{split}
% \begin{align}
% \begin{autobreak}
\label{equ1}
\begin{split}
& P(r |G ,v ,do(e=1))\\
= & \!\!\int \!\!\! P(r |G ,u ,v ,s ,do(e \!\!=\!\!1))P(u ,s |G ,do(e \!\!=\!\!1))du ds\\
= & \!\!\int \!\!\!P(r |G ,u ,v ,s ,e \!\!=\!\!1)P(u |G )P(s )du ds,
\end{split}
% \end{autobreak}
% \end{align}
% \end{split}
\end{equation}
where the second equality is based on the rule of do-calculus and conditional independent property under Figure 1(d) \cite{pearl2009causality}. Essentially, we now can predict intervention based on the recovered join distribution $P\left(G, e, r, v,s\right)$, which finishes the proof.
\end{proof}

% 	If we recover the join distribution $P\left(G_i, e_{ij}, r_{ij}, v_j,s_{ij}\right)$, then the counterfactual perdition is identifiable under the causal model in Figure \ref{fig:motivation}(c).
\end{theorem}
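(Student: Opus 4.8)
The plan is to establish identifiability by reducing the interventional query $P(r \mid G, v, do(e=1))$ to a functional of the recovered joint distribution $P(G, e, r, v, s)$, using Pearl's \emph{do}-calculus and treating $G$ as the observed surrogate for the latent user variable $u$. First I would introduce the latent variables $u$ and $s$ by marginalization inside the mutilated model in which the incoming edges of $e$ have been removed, obtaining
\begin{equation*}
P(r \mid G, v, do(e=1)) = \int P(r \mid G, u, v, s, do(e=1))\, P(u, s \mid G, do(e=1))\, du\, ds .
\end{equation*}

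Next I would simplify the two factors separately using the graphical structure of Figure 1(d). Since the parents of $r$ are exactly $\{u, e, v, s\}$, conditioning on $\{G, u, v, s\}$ blocks every back-door path from $e$ to $r$ and leaves only the direct edge $e \rightarrow r$, so Rule 2 of the \emph{do}-calculus gives $P(r \mid G, u, v, s, do(e=1)) = P(r \mid G, u, v, s, e=1)$. Because $u$ and $s$ are non-descendants of $e$ (they lie among its ancestors), Rule 3 gives $P(u, s \mid G, do(e=1)) = P(u, s \mid G)$; and this factors as $P(u \mid G)\, P(s)$, because every path joining $s$ to $G$ or to $u$ is blocked by one of the colliders $e$ or $r$, hence $s \perp (G, u)$ marginally. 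Substituting these identities produces exactly the decomposition in Equation~(\ref{equ1}), and since the recovered joint $P(G, e, r, v, s)$ --- together with the encoder realizing the surrogate relation $G \mapsto u$ --- furnishes $P(r \mid G, u, v, s, e=1)$, $P(u \mid G)$ and $P(s)$, the integral is computable, which is what identifiability asserts.

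The step I expect to be the main obstacle is the careful verification of the $d$-separation statements that license the two \emph{do}-calculus rewrites: in particular, making precise the sense in which conditioning on the observed surrogate $G$ together with $v$ and $s$ suffices to block the back-door path $e \leftarrow u \rightarrow r$ --- which implicitly relies on $u$ being determined (or nearly so) by $G$ in the assumed generation process --- and confirming the collider structure at $e$ and $r$ that renders $s$ independent of $(G, u)$. Once these graphical facts are pinned down, the remaining steps are routine probability manipulations.
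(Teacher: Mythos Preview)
Your proposal is correct and follows essentially the same route as the paper: the paper's proof consists of exactly the two-line derivation in Equation~(\ref{equ1}), justified only by the phrase ``the rule of do-calculus and conditional independent property under Figure 1(d),'' whereas you have spelled out which rules (Rule~2 for the first factor, Rule~3 plus collider-based $d$-separation for the second) and why they apply. Your flagged obstacle---that $u$ is latent and so $P(u\mid G)$ is not literally a marginal of the stated joint $P(G,e,r,v,s)$, but must be supplied by the surrogate relation $G\to u$---is a genuine subtlety that the paper's proof does not address explicitly; the paper simply asserts computability from the recovered joint.
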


Please note that the aforementioned identification theorem of social recommendation shows that we can estimate the conditional distribution in Equation (\ref{defin}) with the help of social networks and the data extracted from the exposed set $\mathcal{T}$.

	\begin{figure*}[t]
		\includegraphics[width=\textwidth]{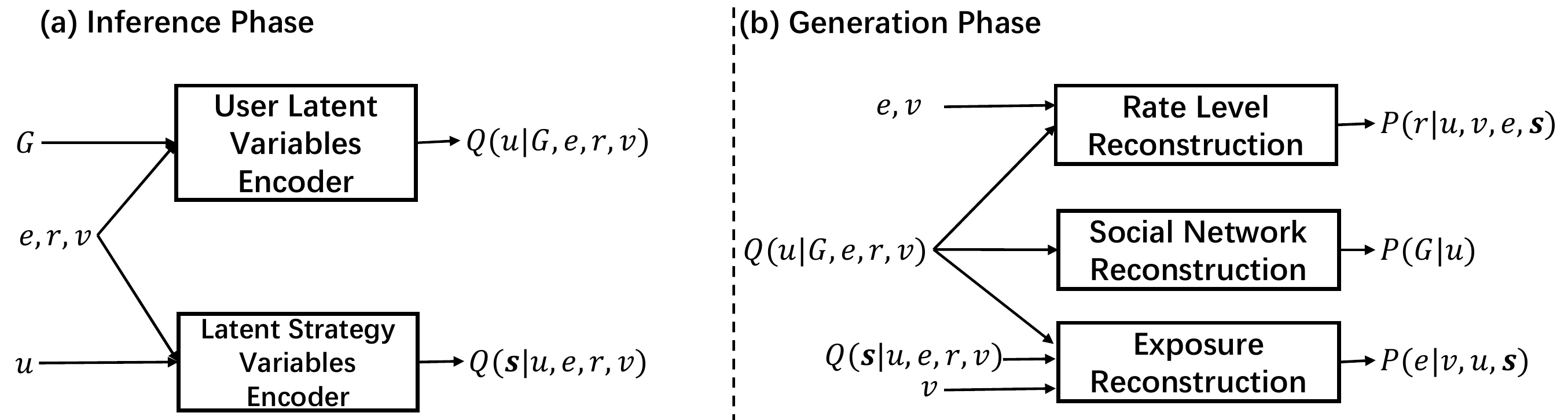}
		\caption{The illustration of the framework of the proposed REST Model. (a) The inference phase contains the Latent User Variable Encoder and the Bias Encoder, which are used to reconstruct the user variables and the strategy variables respectively. (b) The generation phase contains the rate level reconstruction, the social network reconstruction and the exposure reconstruction, which are used to respectively reconstruct the observed rating values, social structures and the exposed variables. In the test step, we only employ the rate level reconstruction for prediction. }
		\label{fig:model}
\end{figure*}
	\section{Algorithm and Implement}\label{sec:imple}
	According to the causal graph shown in Figure \ref{fig:motivation}(d), we devise a variational auto-encoders based framework. We begin with the likelihood of the samples to derive the evidence lower bound (ELBO) of the model. Essentially, the logarithm of joint likelihood $\ln P(G, e, r, v)$ can be written as follows:
% 	\mathbb{E}_{Q(U|G_i, e_{ij}, r_{ij}, v_j)}\mathbb{E}_{Q(S,e_{ij},r_{ij},v_j)}
	\begin{equation}
	    \begin{split}
	        \ln P(G, e, r, v) &= \mathcal{L}_{ELBO} + \\ D_{KL}&\left(Q(u|G, e, r, v)||P(u|G, e, r, v)\right) +\\ D_{KL}&\left(Q(s|e,r,v, u)||P(s|e,r,v, u)\right),
	    \end{split}
	\end{equation}
	in which the second and third lines are the KL divergence between the approximate distributions and the true posteriors. And $\mathcal{L}_{ELBO}$ is the variational lower bound, which can be derived as follows: (See more details in Supplementary.)
	\begin{equation}
	\label{equ3}
	    \begin{split}
	        \mathcal{L}_{ELBO}  =& P(v)-D_{KL}\left(Q(s|e,r,v, u||P(s)\right) \\
	        &-  D_{KL}\left(Q(u|G, e, r, v)||P(u)\right) \\
	        &+  \mathbb{E}_{Q(u_i|G, e, r, v)}\ln\left(P(G|u)\right)\\
	        &+  \mathbb{E}_{Q(u|G, e, r, v)}\mathbb{E}_{Q(s|e,r,v, u)}\ln P(e|v, u, s)\\
	        &+  \mathbb{E}_{Q(u|G, e, r, v)}\ln P(r|u, v, e),
	    \end{split}
	\end{equation}
	where $Q(u|G, e, r, v)$ and $Q(s|e,r,v,u)$ are the \textcolor{black}{approximate functions that are also respectively named} \textit{user latent variables encoder} and the \textit{latent strategies variables encoder}. These two encoders are used to approximate the two true posteriors: $P(u|G, e, r, v)$ and $P(s|e,r,v,u)$. And we further let $P(G|u)$, $P(e|v, u, s)$ and $P(r|u, v, e)$ denote the \textit{social networks reconstruction}, the \textit{exposure reconstruction} and the \textit{rating level prediction}, respectively. To further facilitate the learning of the model, we assume that the latent user variables follows the delta distribution and the latent strategies variables $s$ follows the categorical distribution. Since these two priors are also consistent with the real-world recommendation systems.
	
	Given the item variables $v$, $P(v)$ is a constant. Moreover, since we assume that $P(u|G, e, r, v)$ is a delta distribution, the value of $D_{KL}\left(Q\left(u|G, e, r, v\right)||P\left(u\right)\right)$ is equal to $0$, the proof is provided in the Proposition \ref{propos1}. 
	
\begin{proposition}
\label{propos1}
\textit{(KL-Divergence under Delta Distribution Assumption)} KL-divergence $D_{KL}(Q^*(u|G, e, r, v)\|P(u))$ is zero if $P(u)$ is a delta distribution with the optimal parameters $Q^*=\arg \max_{Q} \mathcal{L}$.
\end{proposition}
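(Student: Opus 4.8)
The plan is to combine the variational identity that is stated just before Equation~(\ref{equ3}) with an elementary fact about Bayesian updating under a degenerate prior. First I would recall the decomposition
\[
\ln P(G,e,r,v) = \mathcal{L}_{ELBO} + D_{KL}\!\left(Q(u|G,e,r,v)\|P(u|G,e,r,v)\right) + D_{KL}\!\left(Q(s|e,r,v,u)\|P(s|e,r,v,u)\right),
\]
and observe that its left-hand side does not depend on the variational family. Hence maximizing $\mathcal{L}_{ELBO}$ over $Q$ is the same as minimizing the sum of the two nonnegative KL terms, so the maximizer $Q^{*}=\arg\max_{Q}\mathcal{L}$ drives each of them to zero; in particular $Q^{*}(u|G,e,r,v)=P(u|G,e,r,v)$, the true posterior.

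The second step is to show that when the prior $P(u)$ is a delta distribution, say with all its mass at a single point $u_{0}$, the true posterior $P(u|G,e,r,v)$ is the \emph{same} delta. This follows from Bayes' rule $P(u|G,e,r,v)\propto P(G,e,r,v|u)P(u)$: conditioning cannot place probability mass where the prior assigns none, so the posterior remains supported on $\{u_{0}\}$, and after normalization it coincides with $P(u)$ itself. Substituting into the conclusion of the first step gives $Q^{*}(u|G,e,r,v)=P(u|G,e,r,v)=P(u)$, and therefore $D_{KL}\!\left(Q^{*}(u|G,e,r,v)\|P(u)\right)=D_{KL}\!\left(P(u)\|P(u)\right)=0$.

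The main obstacle is purely technical: a delta distribution is not absolutely continuous, so both the KL divergences and Bayes' rule must be interpreted carefully. I would make this rigorous by approximating $P(u)$ and $Q(u|G,e,r,v)$ by Gaussians with a common shrinking covariance $\sigma^{2}I$ and letting $\sigma\to 0$: then $D_{KL}\!\left(\mathcal{N}(\mu_{Q},\sigma^{2}I)\|\mathcal{N}(\mu_{P},\sigma^{2}I)\right)=\|\mu_{Q}-\mu_{P}\|^{2}/(2\sigma^{2})$, which stays at $0$ exactly when the two means agree, and Steps~1--2 guarantee that at the optimum the mean of $Q^{*}$ agrees with that of the prior. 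A secondary point to check is that the maximizer in Step~1 is actually attained within the delta (and categorical) variational family assumed in the model, so that ``$Q^{*}(u|G,e,r,v)=P(u|G,e,r,v)$'' is a genuine equality rather than merely a limiting statement; this is the place where the explicit form of the encoders would have to be used.
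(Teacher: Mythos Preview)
Your argument is correct, but it takes a more circuitous path than the paper's. The paper argues directly by contradiction from the explicit form of the ELBO in Equation~(\ref{equ3}): that expression already contains the term $-D_{KL}\!\left(Q(u|G,e,r,v)\|P(u)\right)$, so if this KL were nonzero there would exist a point $u$ with $Q^{*}(u|G,e,r,v)>0$ and $P(u)=0$ (because $P(u)$ is a delta), forcing the KL to $+\infty$ and hence $\mathcal{L}_{ELBO}\to-\infty$, contradicting optimality of $Q^{*}$. No identification of the true posterior and no Bayes' rule is needed.

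Your route instead uses the variational identity to pin $Q^{*}$ to the true posterior $P(u|G,e,r,v)$, and then argues via Bayes that a delta prior yields a delta posterior, whence $Q^{*}=P(u)$. This is valid and conceptually pleasant, but it buys the conclusion at the cost of two extra ingredients (posterior attainment within the variational family, and the Bayesian-updating-under-degenerate-prior step), both of which you correctly flag as needing care. The paper's contradiction argument sidesteps these entirely: it never asks what $Q^{*}$ equals, only observes that any $Q$ differing from the delta incurs an infinite penalty in the objective. In short, both proofs hinge on the same dichotomy---KL against a delta is either $0$ or $+\infty$---but the paper exploits it at the level of the ELBO directly, whereas you detour through the posterior.
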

\begin{proof}
We proof by contradiction. First, we suppose that $\\D_{KL}(Q^*(u|G, e, r,v)\|P(u))\neq 0$. Then, given the delta distribution $P(u)$, there must exist an instance $u$ such that $Q^*(u|G, e, r, v)\neq 0,P(u)=0$. It follows that $KL(Q^*(u|G, e, r, v)\|P(u))\to \infty$ leading to an under-optimized score $\mathcal{L}\to -\infty$ which is a contradiction.
\end{proof}

\textcolor{black}{Combing Proposition \ref{propos1} and Equation (\ref{equ3})}, we can reformulate the objective function of the proposed REST model as follows:
	\begin{equation}
	\label{equ:obj}
	\begin{split}
	    \mathcal{L}_{t} &= D_{KL}\left(Q(s|e,r,v, u)||P(s)\right) \\ &
	    - \mathbb{E}_{Q(u|G, e, r, v)}\ln\left(P(G|u)\right) \\ &
	    -\mathbb{E}_{Q(u|G, e, r, v)}\mathbb{E}_{Q(s|e,r,v, u)}\ln P(e|v, u, s) \\ &
	    -\mathbb{E}_{Q(u|G, e, r, v)}\ln P(r|u, v, e).
	\end{split}
	\end{equation}
	
	According to the objective function shown in Equation (\ref{equ:obj}), we can find that the proposed model can be summarized into two phases: the \textit{inference phase} and the \textit{generation phase}, \textcolor{black}{which are} illustrated in Figure \ref{fig:model}. Specifically, the \textit{inference phase}\textcolor{black}{, which is used to infer the latent variables,} is composed of the user latent variable encoder $Q(u|G, e, r, v)$ and the latent strategies variables encoder $Q(s|e,r,v, u)$. The \textit{generation phase}\textcolor{black}{, which is used to infer the observational variables,} is composed of the social network reconstruction $P(G|u)$, the rate level reconstruction $P(r|u, v, e)$ and the exposure reconstruction $P(e|v, u, s)$. We will describe the implementation of the aforementioned components in the following subsections.
%As shown in Figure \ref{fig:model},	
%	
%\subsection{Preliminary Knowledges}
%	Before providing the detail of the implementation of the \textbf{DBO} model, we need to follow most of the recommendation works and introduce the preliminary knowledges.
	
%	First, we define the user embedding matrix $p \in \mathbb{R}^{n \times d}$. And $u_i$, which is the $i$-th row of $p$, denotes the user embedding of user $u_i$. Similarly, we define the item embedding matrix $q \in \mathbb{R}^{m \times d}$. And $v_j$, which is the $j$-th row of $q$, denotes the item embedding of item $v_j$. In this paper, we assume that the rate level $r$ are definite positive integers, so we further define the rate embedding matrix $o \in \mathbb{R}^{|r| \times d}$, and $o_r$ denotes the embedding with $r$ rate level.

	\subsection{Inference Phase}
	\subsubsection{User Latent Variable Encoder}
% 	先将user embedding 和item embedding以及rate embedding（如果不是评分则没有rate，为了泛化引入rate）
% item，rate聚合
% 社交网络聚合
% 最后使用暴露变量

 In this part, we first introduce the details of the user latent variable encoder $Q(u|G,e,r,v)$ given the social network $G$, item $v$ as well as the corresponding rate level $r$ and exposure variables $e$. \textcolor{black}{\textcolor{black}{The procedure of inferring the user latent variables} is composed of three steps.} \textcolor{black}{First, we aggregate the information of bipartite graph to obtain the aggregated representation $\bm{h}^b$. Second, we employ a similar way to obtain the aggregated representation $\bm{h}^s$ on social networks. Third, we split the aggregated representation for each type of exposure variables and then process them with different \textcolor{black}{multilayer perceptrons (MLPs)}.}
	
As for the first steps, we need to obtain the aggregated representation of the user-item bipartite graph, we employ the techniques of graph attention networks (GAT) \cite{velivckovic2017graph}. In detail, given the user $u_i$, the interacted item sets $C(u)$ and the corresponding ratings, we obtain the aggregated representation $\bm{h}^b$ \textcolor{black}{with the help of attention mechanism} as follows:
	\begin{equation}
	\centering
% 	\small
% 	\label{equ:user_aggre}
	\label{equ:item_aggre}
	    \begin{split}
	    \bm{h}^b=& \sigma\left(\sum_{v \in C(u)}a_{uv}^b \left(v\oplus r\right)\right),\\
	       % a_{uv}^b =& \frac{\exp\left({\sigma(\bm{W}^{(1)})\left[\bm{W}^{(2)}u\oplus\bm{W}^{(2)}\left(v\oplus r\right)\right]}\right)}{\sum_{v\in C(u_i)}\exp{\left(\sigma(\bm{W}_u^{(1)})\left[\bm{W}_u^{(2)}\phi_{u_i}\oplus\bm{W}_u^{(2)}\left(\phi_{v_k}\oplus r_{ik}\right)\right]\right)}},
	       a_{uv}^b = &\frac{g_b(u,v,r;\bm{W}_b)}{\sum_{v'\in C(u)}g_b(u,v',r;\bm{W}_b)},
	    \end{split}
	\end{equation}
	\textcolor{black}{in which $g_b(\cdot)$ with trainable parameters $\bm{W}_b$ is the score function that is used to calculate the matching score given $u,v,r$; }
% 	in which $\bm{W}_u^{(1)}$ and $\bm{W}_u^{(2)}$ are the trainable parameters; 
	\textcolor{black}{$a_{uv}^b$ denotes the important weights between user $u$ and item $v$.} And $\sigma(\cdot)$ denotes the LeakyReLU, which is the leaky version of a rectified linear unit; $C(u)$ denotes the items list that $u$ has accessed in the bipartite graph.
% 	\textcolor{black}{$\phi_{u_i}$ and $\phi_{v_j}$ are the embedding of $u_i$ and $v_j$}, respectively.
	
		\begin{figure}[t]
		\includegraphics[width=0.49\textwidth]{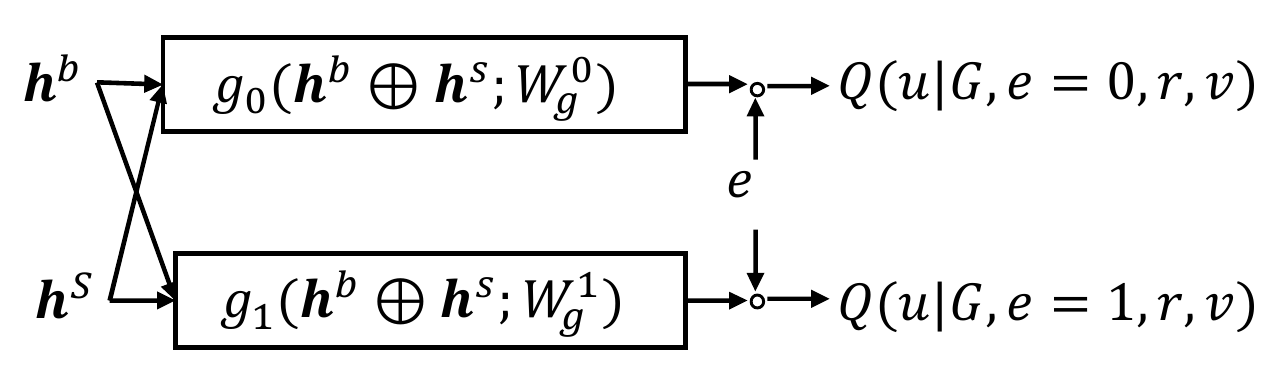}
		\caption{The implementation of the exposure-specific function.}
		\label{fig:model_2}
		\vspace{1.5em}
	\end{figure}
	
	\textcolor{black}{Secondly, we use another GAT to obtain the aggregated representation of social networks. Practically, we let $G$ in $Q(u|G,e,r,v)$ be the substructure of the social networks, for example, the 1st-order neighbors of $u$. The calculation procedure is shown as follows:}
	\begin{equation}
	    \begin{split}
	    \bm{h}^s =& \sigma\left(\sum_{k \in N(u)}a_{uk}^s\cdot u\right),\\
	       % a_{ij}^s =& \frac{\exp{\left(\sigma(\bm{W}_u^{(3)})\left[\bm{W}_u^{(4)}\phi_{u_i}\oplus\bm{W}_u^{(4)}\phi_{u_j}\right]\right)}}{\sum_{u_k\in N(u_i)}\exp{\left(\sigma(\bm{W}_u^{(3)})\left[\bm{W}_u^{(4)}\phi_{u_i}\oplus\bm{W}_u^{(4)}\phi_{u_k}\right])\right)}},\\
	       a_{uk}^s=&\frac{g_s(u,k;\bm{W}_s)}{\sum_{u'\in N(u)}g_s(u,u';\bm{W}_s)}
	        ,
	    \end{split}
	\end{equation}
% 	in which $\bm{W}_u^{(3)}$ and $\bm{W}_u^{(4)}$ are the trainable parameters;
    \textcolor{black}{in which $g_s(\cdot)$ with trainable parameters $\bm{W}_s$ is the score function that is used to calculated the matching score given any two user embedding;}
	\textcolor{black}{$a_{uk}^s$ denotes the important weight between $u$ and $k$; And $N(u)$ denotes the 1st-order neighbors of $u$.}

% 	Finally, as shown in Figure \ref{fig:model_2}, after obtaining the aggregated item representation $h_i^b$ and the social aggregated representation $h_i^s$, to obtain the latent user embedding.
	\textcolor{black}{Finally, in order to obtain the user latent variables, we devise the exposure-specific architecture inspired by CEVAE \cite{louizos2017causal} and TARnet \cite{shalit2017estimating}, which is shown in Figure \ref{fig:model_2}. we use $g_{0}(\cdot),g_{1}(\cdot)$ to generate the user latent variables $\bm{h}_{u}$. Specifically, we can obtain the latent user variables via the exposure-specific functions as follows:}
	\begin{equation}
	\label{equ:exposure}
	        \bm{h}_{u} = (1-e) * g_{0}(\bm{h}^b, \bm{h}^s;\bm{W}_{g}^{0}) + e * g_{1}(\bm{h}^b, \bm{h}^s;\bm{W}_{g}^{1}),
	\end{equation}
	where $g_{0}(\cdot)$ and $g_{1}(\cdot)$ are composed of MLPs, $\bm{W}_{g}^{0}$ and $\bm{W}_{g}^{1}$ are the trainable parameters. For convenience, we let $\bm{W}_g=\{\bm{W}_g^{0}, \bm{W}_g^{1},\bm{W}_b,\bm{W}_s\}$. For a tetrad $(u, v, r,e=1)$ in the exposed set, we use $g_{1}(\bm{h}^b \oplus \bm{h}^s;\bm{W}_g^1)$. For those from the unexposed set, we use $g_{0}(\bm{h}^b \oplus \bm{h}^s;\bm{W}_g^0)$.
	
	\label{unexposed}
	The training of the unexposed $g_{0}(\cdot)$ is crucial to the success of our counterfactual learning problem \cite{louizos2017causal}. The main challenge is that we can only obtain the rate levels on the exposed set from the bipartite graph and the rate levels on the unexposed set are unavailable. To address this challenge, we approximate the unexposed set in the following three steps:
	\begin{itemize}
	    \item First, we extract $C(u)$ and $G_u^1$ for the user $u$, where $G_u^1$ is the set of 1-order neighbors of $u$.
	    \item Second, we extract the $\beta$-frequency neighbors item set $\mathcal{F}_{u}$ by $\mathcal{F}_{u}=\{v|v \notin C(u), \sum_{u' \in N(u)}e_{u'v}\geq \beta\}$.
	    \item Third, we obtain the unexposed sample $(u, v, r, e=0)$, in which $v \in \mathcal{F}_{u}$ and $r$ is the most frequent rate level of $u$'s neighbors, i.e, using the voting method to get the value $r$ for the unexposed samples.
	\end{itemize}
	\textcolor{black}{Please note that the aforementioned procedure to generate the counterfactual samples implicitly leverage the assumption that \textit{both the users and their friends share similar interests and behaviors.}} 
% 	First, we extract $C(u_i)$ and $G_i^1$ for the user $u_i$, where $G_i^1$ is the set of 1-order neighbors of $u_i$. Second, we extract the $\beta$-frequency neighbors item set $F_{u_i}$ by $F_{u_i}=\{v_j|v_j \notin C(u_i), \sum_{u_k \in N(u_i)}e_{kj}\geq \beta\}$. Finally, we obtain the unexposed sample $(u_i, v_j, r_{ij}, e_{ij}=0)$, in which $v_j \in F_{u_i}$ and $r_{ij}$ is the most frequent rate level of $u_i$'s neighbors, i.e, using the voting method to get the value $r_{ij}$ for the unexposed samples.
	
	\subsubsection{Latent Strategies Variables Encoder}
	In this subsection, we aim to model the latent strategies variables $s$ by using the exposure variables $e$, item variables $v$, rate level $r$ and latent user variables $u$. First, we follow the same aggregation method in Equation (\ref{equ:item_aggre}) to calculate the item aggregated representation $\bm{h}^d$ for discrete strategies variables encoder, which is shown as follows:
	\begin{equation}
% 	\tiny
	    \begin{split}
	        \bm{h}^d =& \sigma\left(\sum_{v \in C(u)}a_{uv}^d \left(v\oplus r\right)\right),\\
	       % a_{ij}^d =& \frac{\exp\left({\sigma(\bm{W}_s^{(1)})\left[\bm{W}_s^{(2)}\phi_{u_i}\oplus\bm{W}_s^{(2)}\left(\phi_{v_j}\oplus r_{ij}\right)\right]}\right)}{\sum_{v_k\in C(u_i)}\exp{\left(\sigma(\bm{W}_s^{(1)})\left[\bm{W}_s^{(2)}\phi_{u_i}\oplus\bm{W}_s^{(2)}\left(\phi_{v_k}\oplus r_{ik}\right)\right]\right)}},
	       a_{uv} =& \frac{g_d(u,v,r;\bm{W}_d)}{\sum_{v\in C(u)}g_d(u,v',r;\bm{W}_d)},
	    \end{split}
	\end{equation}
	in which $\bm{W}_d$ are the trainable parameters. Similar to Equation (\ref{equ:exposure}), we model the latent strategies variable with the help of another exposure-specific function as show in Equation(\ref{equ:gumbel}). Note that we use the Gumbel-Softmax trick \cite{jang2016categorical} to estimate latent strategies variable since we assume they follow the categorical distribution.
	\begin{equation}
% 	\small
	\label{equ:gumbel}
	\begin{split}
	    \hat{s} = (1-e) *& \phi_{0}(\bm{h}^d , v;\bm{W}_{\phi}^{0}) + e * \phi_1(\bm{h}^d , v;\bm{W}_{\phi}^{1}),\\
	    s \sim &\text{GUMBLE-SOFTMAX}(\hat{s}),
	\end{split}
	\end{equation}
	in which $\phi_{0}$ and $\phi_{1}$ are the exposure-specific function in latent strategies variables encoder. $\bm{W}_{\phi}^{(0)}$ and $\bm{W}_{\phi}^{(1)}$ are the trainable variables. For convenience, we let $\bm{W}_s=\{\bm{W}_\phi^{0}, \bm{W}_\phi^{1}, \bm{W}_d\}$. 
	\subsection{Generation Phase}
	
	\subsubsection{Social Networks Reconstruction}
	After obtaining the aforementioned two kinds of latent variables, we aim to reconstruct the social networks. In this paper, we follow the configuration of variational graph auto-encoders \cite{kipf2016variational} and reconstruct each edge of social network structures of $u$ as follows:
	\begin{equation}
	    \hat{G}_{u,u'} = \sigma\left(\bm{h}_u, {\bm{h}_{u'}}\right),
	\end{equation}
	where $\hat{G}_{u,u'}$ is the predicted edge between $u$ and $u'$. \textcolor{black}{In order to train the model with the mini-batch, we only reconstruct the 1-st order neighbors of $u$ instead of the whole social networks.}
	
	\subsubsection{Exposure Reconstruction}
	Given the latent user variables $u$, latent strategies variables $s$ and item variables $v$, we aims to model $P(e|v, u, s)$. We employ the following function to reconstruct the exposure variables:
	\begin{equation}
	   \hat{e} = f_e(\bm{h}_u, v, s;\bm{\theta}_e),
	\end{equation}
	in which $\bm{\theta}_e$ are the trainable parameters and $f_e(\cdot)$ is a neural architecture that is composed of MLPs.
	
	\subsubsection{Rate Level Reconstruction}
	Finally, we aims to predict the rate level, given the user latent variables $u$, item variables $v$ and exposure variables $e$.
% 	We first transform $u$ and $v$ by using $f_u(u;\bm{\theta}_u)$ and $f_v(\phi_{v_j};\bm{\theta_v})$, respectively. 
% 	We use the same method to extract the unexposed set mentioned in Section \ref{unexposed}.
	Similar to Equation (\ref{equ:exposure}), we employ the exposure-specific rate level predictor, which is shown as follows:
	\begin{equation}
	    \begin{split}
	        \hat{r}_{uv} &=e * f_{1}(\bm{h}_u,v;\bm{\theta}_{1}) + (1-e) * f_0(\bm{h}_u,v;\bm{\theta_0}),
	       % \\& + (1-e_{ij}) * f_{r0}(f_u(\phi_{u_i};\bm{\theta}_u)\oplus f_v(\phi_{v_j};\bm{\theta}_v);\bm{\theta}_{r0}),
	    \end{split}
	\end{equation}
	in which $f_{0}$ and $f_{1}$ are also composed of MLPs and $\bm{\theta}_{0}$ and $\bm{\theta}_{1}$ are the trainable parameters. For convenience, we let $\bm{\Theta}_r = \{ \bm{\theta}_{0}, \bm{\theta}_{1}, \bm{\theta}_e\}$.

	\subsection{Model Summarization}
    After combining the inference phase and the generation phase, we summarize the total loss of the proposed method as follows:
	\begin{equation}
	    \mathcal{L}(\bm{W}_g, \bm{W}_s, \bm{\Theta}_e) = \mathcal{L}_t + \gamma \mathcal{L}_{reg},
	\end{equation}
	where $\mathcal{L}_{reg}$ is the L2 regularization of the parameters; $\gamma$ is the hyper-parameter.
	
	During the training step, we optimize the model by using the following procedure:
	\begin{equation}
	   \begin{split}
	        (\hat{\bm{W}}_g, \hat{\bm{W}}_s, \hat{\bm{\Theta}}_r) = \mathop{\arg\min}_{\bm{W}_g, \bm{W}_s, \bm{\Theta}_r} \mathcal{L}(\bm{W}_g, \bm{W}_s, \bm{\Theta}_e).
	   \end{split}
	\end{equation}
	
	During the evaluation step, given the $u$ and unexposed item $v$, we let $e=1$. The following procedure with the trained optimal parameters is adapted to the test dataset.
	\begin{equation}
    \begin{split}
        \bm{h}_{u} =& g_{1}(\bm{h}^b , \bm{h}^s;\bm{W}_{g}^{1}),\\
        \hat{r} =& f_{1}(\bm{h}_u,v;\bm{\theta_{1}}).
    \end{split}
	\end{equation}

\section{Experiment}\label{sec:exp}
	In this section, we report experimental results on four datasets to evaluate our method against the state-of-the-art baselines, \textcolor{black}{including the latest methods that use the idea of causal effect}. \textcolor{black}{With the help of the experiment results, we want to explore the following challenges: (1) Can the proposed REST method remove the disadvantage of selection biases? How is the performance compared with the existing methods? Especially the causality-based methods. (2) Can the strategies variables in the proposed REST method model effectively mitigate the non-stationary strategies challenges?}

	\subsection{Datasets}
	In order to evaluate the performance of our method, we conduct experiments on three published datasets (including Ciao, Epinions, and Yelp) with explicit feedback and a private dataset collected from WeChat official accounts with implicit feedback. The details of the aforementioned dataset are shown in Table \ref{tab:dataset}.
	
\begin{table}[tb]
\caption{Statistics of the datasets}
\centering
\scalebox{1.1}{\begin{tabular}{|l|l|l|l|l|}
\hline
\makecell[c]{Dataset}                    & Epinions & Yelp      & Ciao & WeChat    \\\hline
\makecell[c]{\# of users }               & 22K   & 332K   & 7K  & 568K   \\\hline
\makecell[c]{\# of items}                & 296K  & 197K   & 106K     & 242K   \\ \hline
\makecell[c]{\# of user-item \\ relationship} & 798K  & 4,567K & 282K     & 9,422K \\\hline
\makecell[c]{\# of user-user \\ relationship} & 355K  & 7,043K & 57K     & 5,667K \\\hline
\makecell[c]{Social networks \\ density}    & 0.072\%  & 0.0064\%  & 0.11\%     & 0.0018\%  \\\hline
\makecell[c]{Bipartite graph \\ density}    & 0.012\%  & 0.0070\%  & 0.036\%     & 0.0068\% \\\hline
\end{tabular}}
\label{tab:dataset}
\vspace{1.5em}
\end{table}
	
	\begin{itemize}
\item {Ciao}$\footnote{www.ciao.co.uk}$ is a published dataset for the social recommendation. The source cite of Ciao allows users to rate items, and add friends to their ‘Circle of Trust’.
\item {Epinions}$\footnote{http://www.trustlet.org/extended\_epinions.html}$: A benchmark dataset for the social recommendation. In Epinions, a user can rate and give comments on items. Besides, a user can also select other users as their trusters. Note that we treat the trust graphs as social networks. 
\item {Yelp}$\footnote{https://www.kaggle.com/yelp-dataset/yelp-dataset}$: An online review platform where users review local businesses (e.g., restaurants and shops). The user-item interactions and the social networks are extracted in the same way as Epinions.  
\item {WeChat Official Accounts Dataset}: WeChat is a Chinese multi-purpose messaging, social media, and mobile payment application developed by Tencent. And WeChat official accounts dataset is one of the functions. On the WeChat Official Account platform, users can read and share articles. This dataset is constructed by user-article clicking records and user-user social networks on this platform. 
\end{itemize}
For each dataset, we split it into the training set, validation set, and test set. We choose the model with the best validation and evaluate the chosen model on the test set. Note that we do not consider new users and new items in validation and testing. All the methods run with five different random seeds, and we report both the mean and variance. The source code and the prepossessing scripts of the proposed methods are available at the following link$\footnote{ https://github.com/DMIRLAB-Group/REST}$.

\subsection{Hyper-parameters}
\textcolor{black}{We optimize all models with the Adam optimizer with the batch size of 1024. For a fair comparison, all the methods are fine-tuning by searching the learning rate in the range of $\{0.001, 0.0009, \cdots, 0.0001\}$. We also adopt the early stopping strategy that stops training if RMSE/HR@20 on the validation dataset does not decrease/increase for 1500 training steps.}

\subsection{Evaluation Metrics}
	We use different evaluation metrics for datasets with explicit feedback and implicit feedback respectively. 
	
	For the dataset with explicit feedback, we use MSE and RMSE. 
% 	which are respectively shown in Equation (\ref{matric}). 
	The smaller values of MAE and RMSE, the better the predictive accuracy is. Note that even a small improvement in RMSE or MAE terms can have a significant impact on the quality of the top-few recommendations.
% 	\begin{equation}
% 	\label{matric}
% 	\begin{split}
% 	    MAE = \frac{1}{|\mathcal{O}|}\sum_{(u,v)\in\mathcal{O}}|r_{uv} - \hat{r_{uv}}| \\
% 	    RMSE = \sqrt{\frac{1}{|\mathcal{O}|}\sum_{(u,v)\in\mathcal{O}}(r_{uv} - \hat{r_{uv}})^2},
% 	\end{split}
% 	\end{equation}
% 	in which $m$ is the number of the predicted score.
	
	For the dataset with implicit feedback, we use Hit Rate@$K$ (HR@$K$) and Normalized Discounted Cumulative Gain@$K$ (NDCG@$K$).
% 	which are respectively shown in Equation (\ref{matric2}).
HR measures the percentage that recommended items contain at least one correct item interacted by the user, while NDCG takes the positions of correct recommended items into consideration. In this paper, we choose $K$ in $\{5, 10, 20\}$. Note that higher scores of HR@$K$ and NDCG@$K$ indicate better performance.

% \begin{equation}
% \label{matric2}
% \begin{split}
% HR@K &= \frac{1}{|U|}\sum_{u_i\in U}\sigma(R_i \leq K)  \\
% NDCG &= \frac{1}{|U|}\sum_{u_i\in U}\frac{\log 2}{\log(R_i+1)}
% \end{split}
% \end{equation}

	\subsection{Baselines}
	We compare the proposed \textbf{REST} method with four kinds of baselines. Besides the classical matrix factorization based Methods, we also take some graph neural networks based methods into account. Furthermore, we also compare our method with the baselines based on causal inference. Since our method uses the technique of variational influence, we also consider some VAE based methods.
	
	\begin{table*}
\caption{The performance evaluation of the compared methods on Ciao, Epinion, Yelp dataset. The value presented are averaged over 5 replicated with different random seeds. Standard deviation is in the subscript.}
\label{tab:explicit_fb}
\centering
\begin{tabular}{|c|c|cc|cc|cc|} 
\hline
                                &            & \multicolumn{2}{c|}{CIAO} & \multicolumn{2}{c|}{EPINIONS} & \multicolumn{2}{c|}{YELP}  \\ 
\hline
                                & Algorithms & MAE    & RMSE             & MAE & RMSE                    & MAE & RMSE                 \\ 
\hline
\multirow{2}*{MF
  based}     & PMF        & 0.9539 $_{\pm0.0040}$ & 1.1936$_{\pm0.0019}$& 1.0767$_{\pm0.0035}$ & 1.2755$_{\pm0.0022}$& 0.9896$_{\pm0.0023}$ &  1.2454$_{\pm0.0011}$                    \\ 
% \cline{2-8}
                                & NeuMF      &0.7770 $_{\pm0.0077}$        & 0.9828$_{\pm0.0022}$                  & 0.8457$_{\pm0.0053}$    &  1.0838$_{\pm0.0015}$                       & 0.9575$_{\pm0.0081}$     & 1.1958$_{\pm0.0005}$                     \\ 
\hline
\multirow{2}{*}{VAE
  based}    & MultiVAE   &  0.9254$_{\pm0.0025}$      &  1.1908$_{\pm0.0014}$                &  0.9707$_{\pm0.0104}$   &       1.2104$_{\pm0.0039}$                   & 0.9957$_{\pm0.0031}$    & 1.2944$_{\pm0.0020}$                    \\ 
% \cline{2-8}
                                & RecVAE     &   0.9449$_{\pm0.0014}$     &  1.1787$_{\pm0.0022}$                & 0.9614$_{\pm0.0087}$    &    1.1946$_{\pm0.0038}$                     & 0.9944$_{\pm0.0020}$    &  1.2385$_{\pm0.0014}$                    \\ 
\hline
\multirow{6}{*}{Causality-based} 
& CausE      &   0.7943$_{\pm0.0014}$     &  1.0003$_{\pm0.0013}$                &  0.8553$_{\pm0.0019}$   &   1.0705$_{\pm0.0013}$                      &  0.9400$_{\pm0.0031}$   &  1.2039$_{\pm0.0015}$                    \\ 
% \cline{2-8}
                                & CVIB-MF    &  0.9091$_{\pm0.0016}$      & 1.2001$_{\pm0.0011}$                 &  0.9499$_{\pm0.0031}$   &   1.2477$_{\pm0.0003}$                      & 0.9919$_{\pm0.0122}$    &  1.3189$_{\pm0.0024}$                    \\ 
% \cline{2-8}
                                & CVIB-NCF   &  0.7394$_{\pm0.0027}$      & 1.0462$_{\pm0.0013}$                  &  0.8311$_{\pm0.0128}$   &  1.2477$_{\pm0.0003}$                       & 0.9801$_{\pm0.0011}$    & 1.3613$_{\pm0.0043}$                     \\ 
% \cline{2-8}
                                & MACR-MF    &   0.9446$_{\pm0.0051}$     & 1.1859$_{\pm0.0030}$                 &  0.9784$_{\pm0.0092}$   &   1.2364$_{\pm0.0031}$                      &  0.9923$_{\pm0.0004}$    &  1.2344$_{\pm0.0004}$                    \\ 
% \cline{2-8}
                                & DecRS      &  0.7576$_{\pm0.0038}$      &  0.9875$_{\pm0.0033}$                &  0.8242$_{\pm0.0043}$   &   1.0617$_{\pm0.0033}$                      &  -    & -                     \\ 
% \cline{2-8}
%                                 & DecNFM     &    0.7561$_{\pm0.0033}$    &  0.9726$_{\pm0.0021}$                & 0.8165$_{\pm0.0035}$     & 1.0573$_{\pm0.0017}$                        &  -   &  -                    \\ 
\hline
\multirow{3}{*}{GNN
  based}    & GraphRec   &  0.7585$_{\pm0.0051}$      &  0.9743$_{\pm0.0021}$                &  0.8283$_{\pm0.0019}$   &       1.0567$_{\pm0.0019}$                  & 0.9525$_{\pm0.0035}$    &  1.1968$_{\pm0.0017}$                    \\ 
% \cline{2-8}
                                & NGCF       &    0.8061$_{\pm0.0023}$    & 1.0135$_{\pm0.0010}$                  &  0.9348$_{\pm0.0023}$    &  1.1286$_{\pm0.0017}$                       &  0.9396$_{\pm0.0023}$   & 1.2231$_{\pm0.0017}$                     \\ 
% \cline{2-8}
                                & LightGCN   &  0.9373$_{\pm0.0051}$      &  1.1919$_{\pm0.0014}$                &  0.9584$_{\pm0.0011}$   &     1.2025$_{\pm0.0005}$                       &  1.0015$_{\pm0.0024}$   &1.2444$_{\pm0.0019}$                      \\ 
\hline
Ours                            & REST       & \textbf{0.7320}$_{\pm0.0117}$       &      \textbf{0.9635}$_{\pm0.0009}$        &  \textbf{0.8013}$_{\pm0.0045}$   & \textbf{1.0413}$_{\pm0.0007}$                        & \textbf{0.9158}$_{\pm0.0054}$    &    \textbf{1.1733}$_{\pm0.0006}$                  \\
\hline
\end{tabular}
\end{table*}
	
	\textbf{Matrix Factorization based Methods}:
	\begin{itemize}
	\item PMF \cite{mnih2007probabilistic}: Probabilistic Matrix Factorization is one of the most traditional methods for the recommendation that models latent factors of users and items by Gaussian distributions.
	\item NeuMF \cite{he2017neural}: Neural network based Collaborative Filtering replaces the inner product with a neural architecture that can learn an arbitrary function from data. 
	\item BPRMF \cite{rendle2009bpr}: BPRMF which is optimized by stochastic gradient descent with bootstrap sampling, is the maximum posterior estimator that derived from the Bayesian theorem.
	\end{itemize}
	
	\textbf{Graph Neural Networks based Methods}:
	\begin{itemize}
	    \item GraphRec \cite{fan2019graph}: A graph neural networks based method that leverages graph attention mechanism to aggregate the information of the social networks and user-item relations.
	    \item LightGCN \cite{he2020lightgcn}: LightGCN optimizes the user and item representation by linearly propagating them on the bipartite graph, and uses the weighted sum of the representation.
	    \item NGCF \cite{wang2019neural}: NGCF integrates the user-item interactions by modeling the high-order connectivity and injecting the collaborative signal into the embedding process.
	\end{itemize}
	\textbf{Variational Auto-Encoder based Methods}:
	\begin{itemize}
	    \item MultVAE \cite{liang2018variational}: MultVAE extends VAE to collaborative filtering for implicit feedback, so it performs worse on the dataset with explicit feedback.
	    \item RecVAE \cite{shenbin2020recvae} uses the multinomial likelihood variational auto-encoders to map user feedbacks to user embeddings.
	\end{itemize}
	\textbf{Causality-based Methods}:
	\begin{itemize}
	    \item CausE \cite{bonner2018causal}: CausE jointly learns two CTR models and uses a multi-task objective that factorizes the matrix of observations.
	    \item CVIB \cite{wang2020information}: CVIB learns a balanced model based on Information Bottleneck, which simultaneously optimizes the factual and counterfactual embeddings. In this paper, we compare our method with two variants of CVIB: MF-CVIB and NCF-CVIB.
	    \item \textcolor{black}{MACR-MF \cite{wei2020model}: MACR-MF leverages the idea of causal effect and builds a multi-task learning schema over MF. We compare MACR-MF with our method in the implicit feedback dataset. }
	    \item \textcolor{black}{DecRS \cite{wang2021deconfounded}: Deconfounded Recommender System (DecRS) models the causal effect of user representation on the prediction score, which eliminates the impact of the confounder with the help of backdoor adjustment. Note that we only compare DecRs in the Ciao and Epinions datasets, since this method needs the categories of items and the Yelp dataset does not contain the item categories.} 
	   % \item ACL \cite{xu2020adversarial} proposes a minimax setting for counterfactual recommendation and trains two recommendation models adversarially.
	\end{itemize}

\begin{table*}[hbpt]
\caption{The performance evaluation of the compared methods on WeChat dataset. The value presented are averaged over 5 replicated with different random seeds. Standard deviation is in the subscript.}
\centering
\scalebox{1.1}{\begin{tabular}{|l|lcccccc|}
\hline
Model Class & \multicolumn{1}{l|}{Models}            & HR@5           & NDCG@5         & HR@10          & NDCG@10        & HR@20          & {NDCG@20}       \\ \hline
& \multicolumn{1}{l|}{BPRMF}                        & 56.16$_{\pm0.16}$ & 44.43$_{\pm0.14}$ & 67.11$_{\pm0.13}$ & 47.98$_{\pm0.14}$ & 77.47$_{\pm0.16}$ & 50.60$_{\pm0.13}$  \\
& \multicolumn{1}{l|}{NeuMF}                        & 60.76$_{\pm0.33}$ & 48.20$_{\pm0.45}$ & 71.18$_{\pm0.32}$ & 51.57$_{\pm0.42}$ & 81.27$_{\pm0.59}$ & 54.13$_{\pm0.49}$  \\\hline
 & \multicolumn{1}{l|}{SocialMF}                     & 41.78$_{\pm0.16}$ & 32.57$_{\pm0.29}$ & 50.01$_{\pm0.18}$ & 35.23$_{\pm0.29}$ & 58.23$_{\pm0.14}$ & 37.31$_{\pm0.25}$  \\
\multirow{-6}{*}{\begin{tabular}[c]{@{}l@{}}MF based\end{tabular}}
& \multicolumn{1}{l|}{MultVAE}                     & 53.15$_{\pm0.07}$ & 41.54$_{\pm0.06}$ & 64.97$_{\pm0.08}$ & 45.36$_{\pm0.07}$ & 76.59$_{\pm0.08}$ & 48.30$_{\pm0.07}$ \\
\multirow{-3}{*}{\begin{tabular}[c]{@{}l@{}}VAE based\end{tabular}}
& \multicolumn{1}{l|}{RecVAE}                         & 55.54$_{\pm0.08}$ & 43.55$_{\pm0.06}$ & 67.34$_{\pm0.09}$ & 47.37$_{\pm0.06}$ & 78.64$_{\pm0.06}$ & 50.23$_{\pm0.05}$  \\ \hline
& \multicolumn{1}{l|}{GraphRec}                     & 54.61$_{\pm1.63}$ & 42.47$_{\pm1.29}$ & 66.57$_{\pm1.74}$ & 46.35$_{\pm1.32}$ & 77.77$_{\pm1.58}$ & 49.18$_{\pm1.28}$   \\
& \multicolumn{1}{l|}{LightGCN}                     & - & - & - & - & - & -  \\
& \multicolumn{1}{l|}{NGCF}                     & - & - & - & - & - & -  \\
\hline
\multirow{-5}{*}{\begin{tabular}[c]{@{}l@{}}GNN based\end{tabular}}
& \multicolumn{1}{l|}{MF-CVIB}                     & 62.28$_{\pm0.26}$ & 50.14$_{\pm0.27}$ & 72.40$_{\pm0.21}$ & 53.43$_{\pm0.25}$ & 81.49$_{\pm0.18}$ & 55.73$_{\pm0.24}$ \\
\multirow{-0.5}{*}{\begin{tabular}[c]{@{}l@{}}Causality-based\end{tabular}}
& \multicolumn{1}{l|}{NCF-CVIB}                         & 63.89$_{\pm0.70}$ & 52.27$_{\pm0.98}$ & 72.84$_{\pm0.49}$ & 55.17$_{\pm0.91}$ & 81.18$_{\pm0.46}$ & 57.28$_{\pm0.87}$ \\
& \multicolumn{1}{l|}{MACR-MF}                         & 61.81$_{\pm0.07}$ & 48.64$_{\pm0.23}$ & 72.34$_{\pm0.30}$ & 52.06$_{\pm0.11}$ & 81.23$_{\pm0.46}$ & 54.32$_{\pm0.07}$ \\
& \multicolumn{1}{l|}{CausE}                         & 59.54$_{\pm0.31}$ & 41.54$_{\pm0.06}$ & 64.97$_{\pm0.08}$ & 45.36$_{\pm0.07}$ & 76.59$_{\pm0.08}$ & 48.30$_{\pm0.07}$ \\ \hline
Ours 
& \multicolumn{1}{l|}{REST}                      & \textbf{65.31}$_{\pm0.27}$ & \textbf{52.07}$_{\pm0.19}$ & \textbf{76.05}$_{\pm0.20}$ & \textbf{55.56}$_{\pm0.19}$ & \textbf{85.37}$_{\pm0.13}$ & \textbf{57.92}$_{\pm0.14}$  \\ \hline 
\end{tabular}}
\label{tab:wechat}
\end{table*}

\subsection{Deconfounding Performance}
\textcolor{black}{In this subsection, we aim to answer (1) Can the proposed REST method remove the disadvantage of selection biases? And how is the performance compared with the existing methods, including the latest causality-based method?}

\subsubsection{Experiment results on datasets with explicit feedback} \textcolor{black}{We first illustrate the experiment results on the explicit feedback dataset, in which the users provide the rating for items. Hence we follow \cite{fan2019graph} and employ the MAE and RMSE as the evaluation metric.} The experiment results on Ciao, Epinions, and Yelp dataset are shown in Table \ref{tab:explicit_fb}. 
From experiment results, we can obtain the following observations: 
\textcolor{black}{\begin{itemize}
    \item The proposed REST method outperforms the other methods with a large margin, which proves that our method can effectively remove the disadvantages of the selection biases. Furthermore, the superior performance of the proposed REST reflects the advantages of the identification theorem.
    \item According to the Table \ref{tab:explicit_fb}, the proposed REST achieves different degrees of improvement on the three explicit datasets. In detail, the REST respectively obtains $26.4\%$, $18.1\%$ and $7.7\%$ improvements on the Ciao, Epinion and Yelp datasets. This is because the social networks densities of these datasets are different. According to Table \ref{tab:dataset}, we can find that the Ciao dataset contains the densest social networks while the Yelp contains the sparsest one. This is because the denser social networks can provide more counterfactual samples, which further benefit the model performance.
    \item Our method not only outperforms the conventional recommendation algorithms like PMF and NeuralMF but also outperforms the VAE-based methods like MultiVAE and RecVAE. This is because the VAE-based methods assume that the distributions of latent variables follow the Gaussian distribution but the assumption is too strong and does not work in practice. In the meanwhile, assuming that the latent variables follow the delta distribution, the proposed REST method can avoid the aforementioned drawback.
    \item The graph neural networks based methods like the LightGCN and the GraphRec, which are designed for the implicit feedback datasets, perform poorly in the explicit feedback dataset. For one thing, this verifies that the graph neural networks are still poisoned by the selection biases even though they leverage the social networks. For another thing, the proposed method leverage the social to generate the counterfactual samples can mitigate the selection biases to some extent.
    \item As for the causal inference based method, our method outperforms the causality based method like CausE, MACR-MF and DecFM. This is because the proposed REST method models latent strategy variables that break the stationary strategy assumption. We will further explore the effectiveness of the latent strategy variables in the following subsections. 
\end{itemize}}

% (1) The G methods like LightGCN, which are designed for the implicit feedback dataset, perform poorly in the explicit feedback dataset. (2) NeuMF performs much better than the other matrix factorization based methods, which indicates that the neural network based architectures are propitious to capture the complex interactions between the users and items. (3) As for the causal inference based method, our method outperforms CausE since our method well uses the information of social networks. (4) As for graph neural networks based methods, our method and GraphRec have the similar architecture, but our method performs better than GraphRec, because the proposed \textbf{DUES} method models the strategies variables. 

\subsubsection{Experiment results on the dataset with implicit feedback} \textcolor{black}{Then we further illustrate the experimental results on the implicit feedback dataset, in which only the actions of users like clicking or purchasing, are collected. Hence we follow \cite{wei2020model} and employ HR@K and NDCG@K as the evaluation metrics. The implicit feedback scenario is more challenging, because it is hard to distinguish if the unseen samples are disliked or not.} 
The experiment results on the WeChat Offical Account dataset are shown in Table \ref{tab:wechat}. We do not report the experiment result of LightGCN and NGCF because of the limited CPU memory. According to the experiment results, we can get the following conclusions:  
\textcolor{black}{\begin{itemize}
    \item As similar to the experiment results on the explicit feedback dataset, we can find that the proposed REST method still achieves the best performance, which reflects that our method can work on both the explicit and the implicit scenarios.
    \item Compared with the causal based methods like MACR-MF \cite{wei2020model} and the other types of methods, the causality-based methods achieve a better result, which reflects that the selection biases really harm the performance and taking causality into consideration will ease the disadvantage to some extent. 
    \item In the meanwhile, our method also achieves good results. This is because the Wechat dataset is more likely controlled by different types of strategies like different fast-breaking news. Therefore, taking the stationary assumption and ignoring the strategies will degenerate the performance of the recommendation systems even the selection biases have been taken into account.
\end{itemize}}
%  (1) Compared with the social recommendation methods like GraphRec, the proposed \textbf{DUES} achieves a better result. (2) Compared with the causal based methods like MACR-MF {wei2020model}, our method also achieves good results since the latent strategies variables are explicitly modeled and disentangled. Hence, the popularity of items is removed.

\subsection{Ablation Analysis}

% 	In order to verify the effectiveness of each component of our model, we further devise the following the model variants. 
In order to evaluate the effectiveness of the latent unobserved strategy variables, we raise a model invariant named \textbf{REST-S}, which removes the latent unobserved strategy variables in the data generation process. In this case, we follow the stationary strategy assumption and do not model the strategies behind the data.
% 	\textcolor{black}{\begin{itemize}
% 	    \item \textbf{DUES-S}: In order to evaluate the effectiveness of the latent discrete unobserved strategy variables, we remove it in the data generation process shown in Figure \ref{fig:motivation} (c). In this case, we follow the stationary strategy assumption and do not model the strategies behind the data. 
% 	   % \item \textbf{DUES-G}: Since the proposed method leverage the information from social networks, we further remove the social networks from the data generation process shown in Figure \ref{fig:motivation} (c) for a fair comparison between the proposed method and that without using social networks. In this case, we can not well reconstruct the user latent variables and the strategies variables are entangled with the user variables.
% 	\end{itemize}}

\subsubsection{The effectiveness of the discrete strategy variables}
In order to verify the effectiveness of discrete exposure component of our model, we devise \textbf{REST-S}. The experiment results are shown in Figure \ref{fig:mes_ablation} and \ref{fig:ndcg_ablation}. Based on the experiment results, we can observe that:
\textcolor{black}{\begin{itemize}
    \item Compared DUSE-S with the standard REST, we can find that the performance of REST-S is lower that of REST, which reflects the advantages of modeling the discrete strategies.
    \item Since we do not model the discrete strategies in REST-S, both the REST-S and other causality based methods like MACR-MF are the same from the view of principle, so it is reasonable to guess that the performance of both the REST-S and other causality-based methods are similar. Compared REST-S with the other baselines, like CVIB-NCF and MACR-MF, we can find that we still obtain a comparable performance, which not only validates the aforementioned guess but also the effectiveness of modeling strategies.
\end{itemize}}
% Compared \textbf{DUES-S} with other methods, the performance of \textbf{DUES-S} is still better than the causal based method like CausE, because \textbf{DUES-S} uses the information of social networks and obtain a better user embedding. \textbf{DUES-S} also achieves a better performance than the social recommendation method like GraphRec, which is because \textbf{DUES-S} also addresses the biased issue in the recommendation.

% \textcolor{black}{
% \subsubsection{The effectiveness of the social networks}
% Since the proposed DUES method leverage the extra information from the neural networks, for a fair comparison, we devise DUES-G that remove social networks in the generation process but keep using the counterfactual samples. The experiment results are shown in Figure \ref{fig:mes_ablation} and \ref{fig:ndcg_ablation}. Note that the DUES-G mixes the user information and the strategies information. According to the experiment results, we can obtain the following conclusions:
% \begin{itemize}
%     \item Compared with DUES-G and the standard DUES, we can find that the performance of DUES-G drops a little, reflecting that reconstructing the networks can enhance the performance of the recommendation algorithm.
%     \item Compared with DUES-G and the other baselines, we can find that the DUES-G still outperform the latest causality based methods like NCF-VIB and MACR-MF, which reflects that the effectiveness of our method. 
% \end{itemize}
% } 

\begin{figure}[t]
\scalebox{0.49}
{\includegraphics[width=\textwidth]{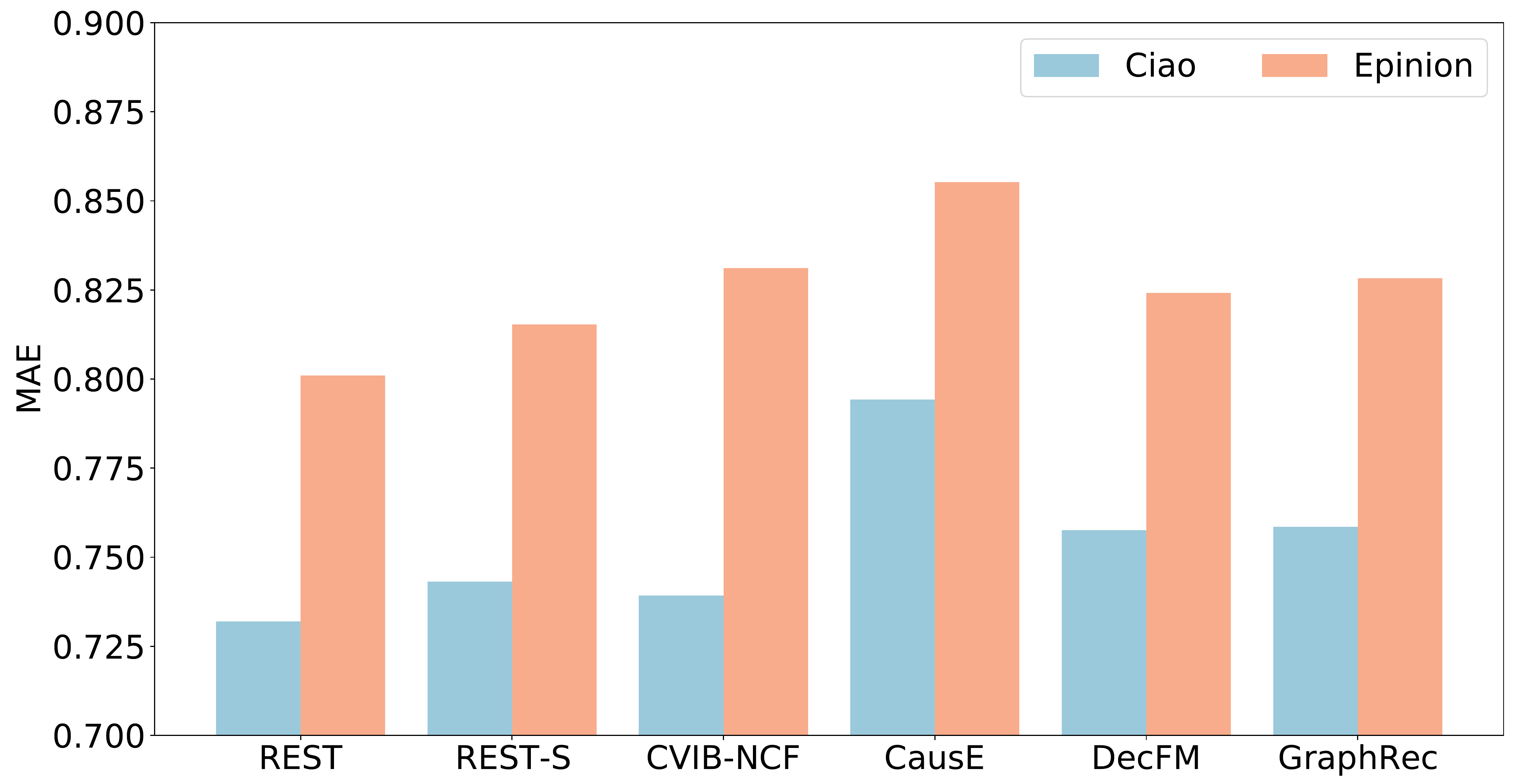}}
	\caption{MSE evaluation on Social networks and latent strategies variables. (\textit{best view in color})}
	\label{fig:mes_ablation}
\end{figure}

\begin{figure}[t]
\scalebox{0.477}
{\includegraphics[width=\textwidth]{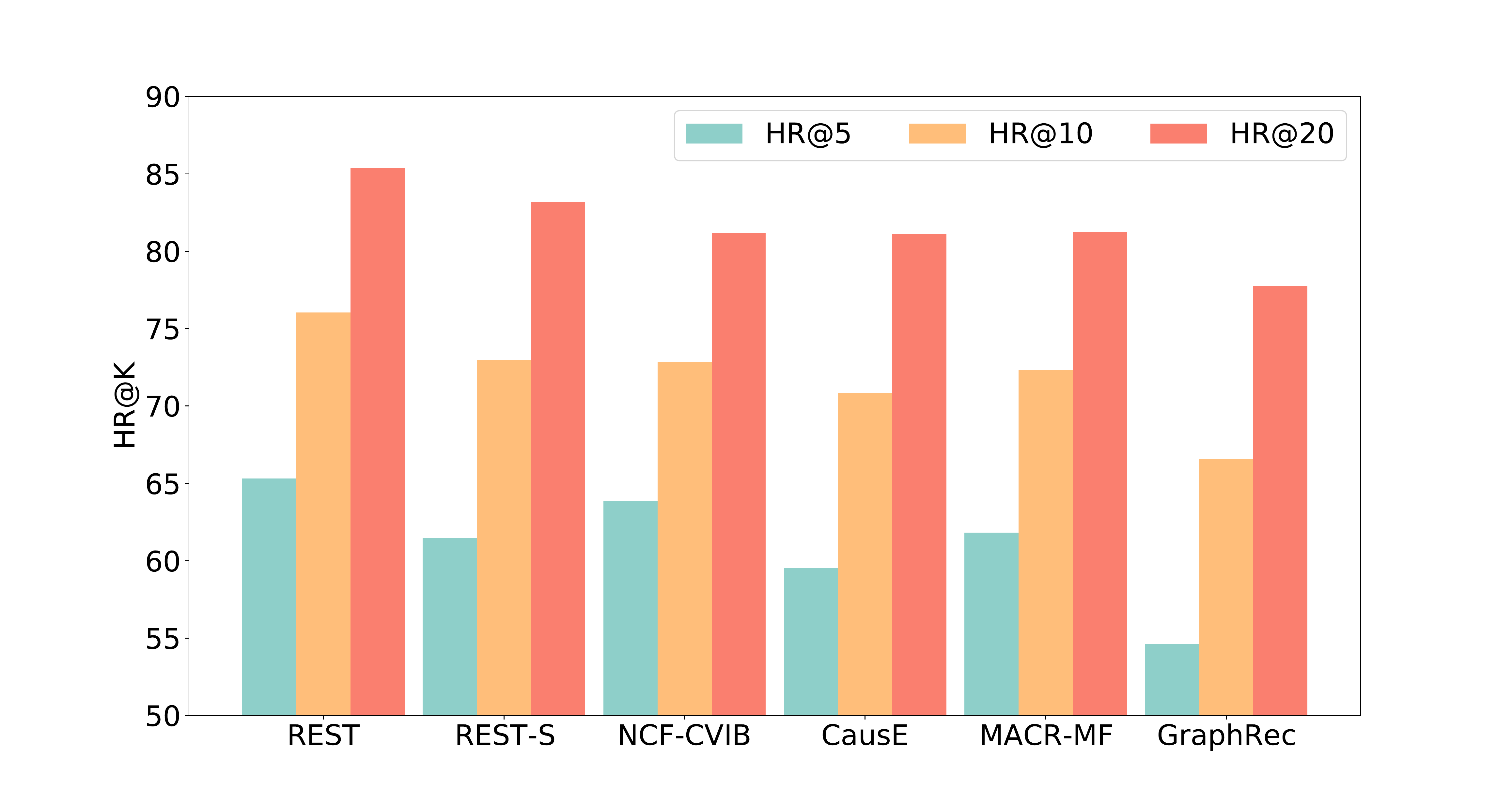}}
	\caption{HR@K evaluation on Social networks and latent strategies variables.(\textit{best view in color})}
	\label{fig:ndcg_ablation}
\end{figure}

% \begin{figure}[tb]
% \scalebox{0.45}
% {\includegraphics[width=\textwidth]{}}
% 	\caption{The experiment result on different dataset with different sizes of embedding dimensions.}
% 	\label{fig:embeding_size}
% \end{figure}

% \subsection{The Sensitivity of Hyper-parameters}
% As shown in Figure \ref{fig:embeding_size}, we also try different embedding size on different, we can find that the smaller the embedding size, the worse the experiment result.
\subsection{Visualization}
To further show the necessity for modeling the latent discrete strategies, we provide the visualization of the discrete strategies with 64 dimensions on the Ciao dataset, which is shown in Figure \ref{fig:visualization}. We split the 64 dimensions into 16 different categorical distributions which represent 16 different one-hot vectors. \textcolor{black}{Note that the horizontal axis stands for each dimension of latent variables. }We choose three different traditional festivals, Christmas, Thanksgiving Day, and Valentine's Day and draw the latent discrete strategies variables of the same user at each festival on different years. The value of the yellow block is $1$ and the value of the purple is $0$. According to the visualization, we can find that:
\textcolor{black}{
\begin{itemize}
    \item Shared patterns among the reconstructed strategy variables come from the same festival. For example, on Christmas, the locations of yellow blocks are similar. This means that the same festival shares similar promotion strategies.
    \item The latent discrete strategies variables from different festivals look different, which means that different festivals have different promotion strategies. By modeling the strategies variables, we can well model the complex user-item relationships despite the disadvantages of selection bias.
\end{itemize}
}
% (1) the same latent discrete strategies variables from the same festivals share similar patterns, which means that the same festival shares a similar promotion strategy. (2) The latent discrete strategies variables from different festivals look different, which means that different festivals have the different promotion strategies. By disentangling the strategies variables, we can well model the complex user-item relationships despite of the disadvantages of selection bias.
\begin{figure}[t]
\scalebox{0.49}
{\includegraphics[width=\textwidth]{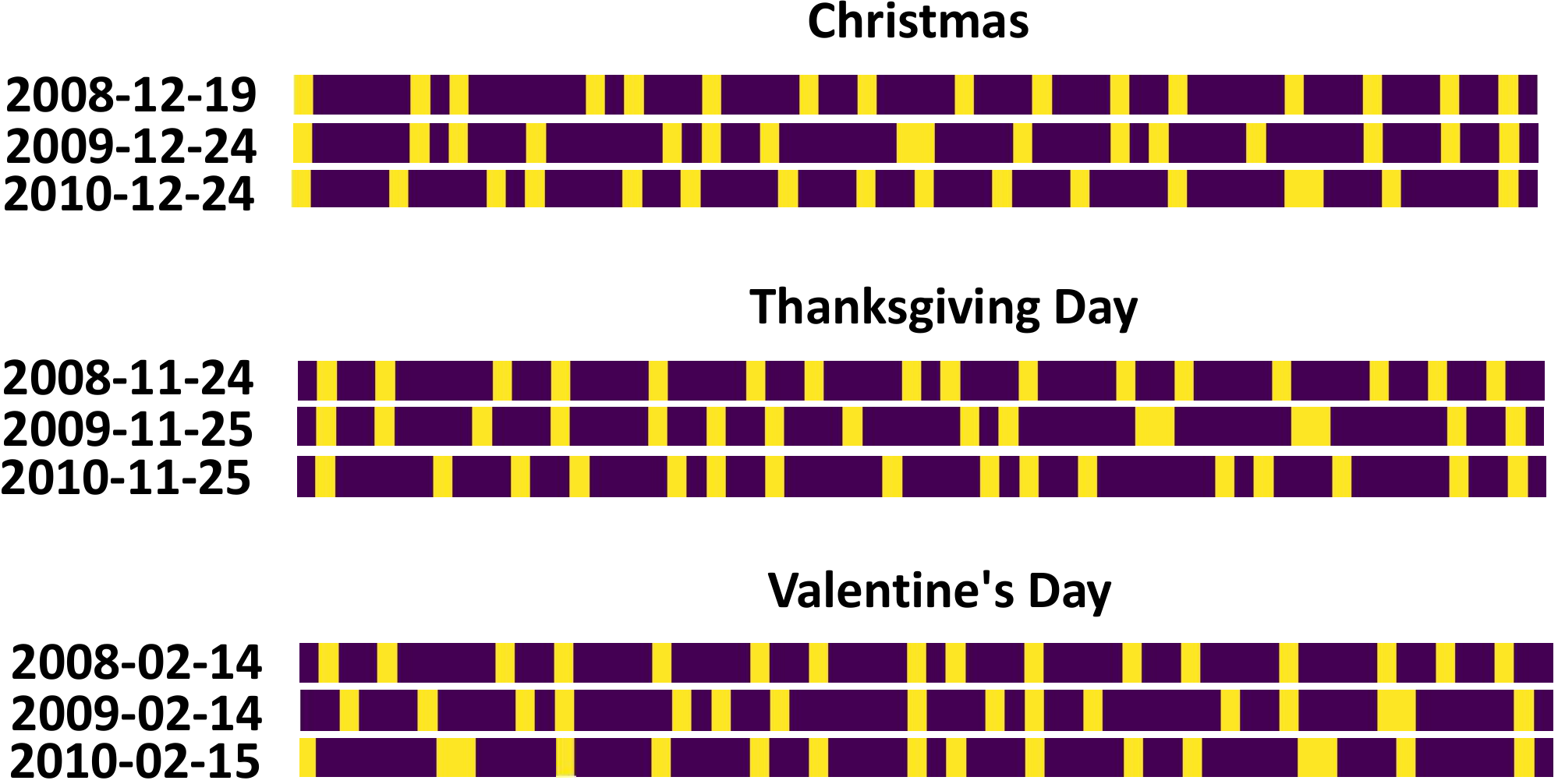}}
	\caption{The visualization of the discrete strategies variables. The vertical coordinate and the horizontal coordinate denote the different festivals and the different dimensions in the form of the one-hot vector. (\textit{best view in color})}
	\label{fig:visualization}
	\vspace{1.5em}
\end{figure}

\section{Conclusion}\label{sec:conclu}
This paper presents a debiased recommendation framework based on the explicitly modeling and reconstructing the discrete unobserved exposure strategies. In the proposed method, we reconstruct the latent exposure variables from the observational data using a variational auto-encoders framework, with the help of the clues from both the social networks and the items. The correctness, as well as the effectiveness of our proposal, are verified on four real-world datasets. The success of our model not only reveals that the latent exposure strategies are the cause of the well-known selection bias problem but also provides an effective solution for this open problem in the recommendation system. The visualization of the recovered exposure strategies on the real-world dataset also provides some interesting insights into the existing recommendation systems. 

% \textcolor{black}{Currently, we only consider the strategies variables as the latent variables. In practice. the strategies can be observed and time-dependency maybe exist between any two strategies, so how explicit introduce the strategies would be an interesting future direction.}

\section{Acknowledgments}

% This research was supported in part by Natural Science Foundation of China (61876043, 61976052), Science and Technology Planning Project of Guangzhou (201902010058). 
We would like to thank Lingling Yi and Li Li from WeChat for their help and supports on this work.

% if have a single appendix:
%\appendix[Proof of the Zonklar Equations]
% or
%\appendix  % for no appendix heading
% do not use \section anymore after \appendix, only \section*
% is possibly needed

% use appendices with more than one appendix
% then use \section to start each appendix
% you must declare a \section before using any
% \subsection or using \label (\appendices by itself
% starts a section numbered zero.)
%

% you can choose not to have a title for an appendix
% if you want by leaving the argument blank

% use section* for acknowledgment
% \ifCLASSOPTIONcompsoc
%   % The Computer Society usually uses the plural form
%   \section*{Acknowledgments}
% \else
%   % regular IEEE prefers the singular form
%   \section*{Acknowledgment}
% \fi

% The authors would like to thank...

% Can use something like this to put references on a page
% by themselves when using endfloat and the captionsoff option.
\ifCLASSOPTIONcaptionsoff
  \newpage
\fi

% trigger a \newpage just before the given reference
% number - used to balance the columns on the last page
% adjust value as needed - may need to be readjusted if
% the document is modified later
%\IEEEtriggeratref{8}
% The "triggered" command can be changed if desired:
%\IEEEtriggercmd{\enlargethispage{-5in}}

% references section

% can use a bibliography generated by BibTeX as a .bbl file
% BibTeX documentation can be easily obtained at:
% http://mirror.ctan.org/biblio/bibtex/contrib/doc/
% The IEEEtran BibTeX style support page is at:
% http://www.michaelshell.org/tex/ieeetran/bibtex/

\bibliographystyle{IEEEtran}
\bibliography{main_2}

% Generated by IEEEtran.bst, version: 1.14 (2015/08/26)
\begin{thebibliography}{10}
\providecommand{\url}[1]{#1}
\csname url@samestyle\endcsname
\providecommand{\newblock}{\relax}
\providecommand{\bibinfo}[2]{#2}
\providecommand{\BIBentrySTDinterwordspacing}{\spaceskip=0pt\relax}
\providecommand{\BIBentryALTinterwordstretchfactor}{4}
\providecommand{\BIBentryALTinterwordspacing}{\spaceskip=\fontdimen2\font plus
\BIBentryALTinterwordstretchfactor\fontdimen3\font minus
  \fontdimen4\font\relax}
\providecommand{\BIBforeignlanguage}[2]{{%
\expandafter\ifx\csname l@#1\endcsname\relax
\typeout{** WARNING: IEEEtran.bst: No hyphenation pattern has been}%
\typeout{** loaded for the language `#1'. Using the pattern for}%
\typeout{** the default language instead.}%
\else
\language=\csname l@#1\endcsname
\fi
#2}}
\providecommand{\BIBdecl}{\relax}
\BIBdecl

\bibitem{zhang2019deep}
S.~Zhang, L.~Yao, A.~Sun, and Y.~Tay, ``Deep learning based recommender system:
  A survey and new perspectives,'' \emph{ACM Computing Surveys (CSUR)},
  vol.~52, no.~1, pp. 1--38, 2019.

\bibitem{zhang2020explainable}
Y.~Zhang and X.~Chen, ``Explainable recommendation: A survey and new
  perspectives,'' \emph{Foundations and Trends in Information Retrieval},
  vol.~14, no.~1, pp. 1--101, 2020.

\bibitem{1167344}
G.~{Linden}, B.~{Smith}, and J.~{York}, ``Amazon.com recommendations:
  item-to-item collaborative filtering,'' \emph{IEEE Internet Computing},
  vol.~7, no.~1, pp. 76--80, 2003.

\bibitem{covington2016deep}
P.~Covington, J.~Adams, and E.~Sargin, ``Deep neural networks for youtube
  recommendations,'' in \emph{Proceedings of the 10th ACM conference on
  recommender systems}, 2016, pp. 191--198.

\bibitem{10.1145/3178876.3186066}
\BIBentryALTinterwordspacing
X.~Wang, X.~He, F.~Feng, L.~Nie, and T.-S. Chua, \emph{TEM: Tree-Enhanced
  Embedding Model for Explainable Recommendation}.\hskip 1em plus 0.5em minus
  0.4em\relax Republic and Canton of Geneva, CHE: International World Wide Web
  Conferences Steering Committee, 2018, p. 1543–1552. [Online]. Available:
  \url{https://doi.org/10.1145/3178876.3186066}
\BIBentrySTDinterwordspacing

\bibitem{ma2019learning}
J.~Ma, C.~Zhou, P.~Cui, H.~Yang, and W.~Zhu, ``Learning disentangled
  representations for recommendation,'' \emph{arXiv preprint arXiv:1910.14238},
  2019.

\bibitem{cen2020controllable}
Y.~Cen, J.~Zhang, X.~Zou, C.~Zhou, H.~Yang, and J.~Tang, ``Controllable
  multi-interest framework for recommendation,'' in \emph{Proceedings of the
  26th ACM SIGKDD International Conference on Knowledge Discovery \& Data
  Mining}, 2020, pp. 2942--2951.

\bibitem{tennenholtz2019rethinking}
M.~Tennenholtz and O.~Kurland, ``Rethinking search engines and recommendation
  systems: a game theoretic perspective,'' \emph{Communications of the ACM},
  vol.~62, no.~12, pp. 66--75, 2019.

\bibitem{baeza2004query}
R.~Baeza-Yates, C.~Hurtado, and M.~Mendoza, ``Query recommendation using query
  logs in search engines,'' in \emph{International conference on extending
  database technology}.\hskip 1em plus 0.5em minus 0.4em\relax Springer, 2004,
  pp. 588--596.

\bibitem{bouraga2014knowledge}
S.~Bouraga, I.~Jureta, S.~Faulkner, and C.~Herssens, ``Knowledge-based
  recommendation systems: a survey,'' \emph{International Journal of
  Intelligent Information Technologies (IJIIT)}, vol.~10, no.~2, pp. 1--19,
  2014.

\bibitem{koren2009matrix}
Y.~Koren, R.~Bell, and C.~Volinsky, ``Matrix factorization techniques for
  recommender systems,'' \emph{Computer}, vol.~42, no.~8, pp. 30--37, 2009.

\bibitem{mnih2007probabilistic}
A.~Mnih and R.~R. Salakhutdinov, ``Probabilistic matrix factorization,''
  \emph{Advances in neural information processing systems}, vol.~20, pp.
  1257--1264, 2007.

\bibitem{he2017neural}
X.~He, L.~Liao, H.~Zhang, L.~Nie, X.~Hu, and T.-S. Chua, ``Neural collaborative
  filtering,'' in \emph{Proceedings of the 26th international conference on
  world wide web}, 2017, pp. 173--182.

\bibitem{jamali2010matrix}
M.~Jamali and M.~Ester, ``A matrix factorization technique with trust
  propagation for recommendation in social networks,'' in \emph{Proceedings of
  the fourth ACM conference on Recommender systems}, 2010, pp. 135--142.

\bibitem{steck2011item}
H.~Steck, ``Item popularity and recommendation accuracy,'' in \emph{Proceedings
  of the fifth ACM conference on Recommender systems}, 2011, pp. 125--132.

\bibitem{wang2020click}
W.~Wang, F.~Feng, X.~He, H.~Zhang, and T.-S. Chua, ``" click" is not equal to"
  like": Counterfactual recommendation for mitigating clickbait issue,''
  \emph{arXiv preprint arXiv:2009.09945}, 2020.

\bibitem{chen2020bias}
J.~Chen, H.~Dong, X.~Wang, F.~Feng, M.~Wang, and X.~He, ``Bias and debias in
  recommender system: A survey and future directions,'' \emph{arXiv preprint
  arXiv:2010.03240}, 2020.

\bibitem{song2021similar}
C.~Song, Y.~Ge, T.~Ge, H.~Wu, Z.~Lin, H.~Kang, and X.~Yuan, ``Similar but
  foreign: Link recommendation across communities,'' \emph{Information
  Sciences}, vol. 552, pp. 142--166, 2021.

\bibitem{ovaisi2020correcting}
Z.~Ovaisi, R.~Ahsan, Y.~Zhang, K.~Vasilaky, and E.~Zheleva, ``Correcting for
  selection bias in learning-to-rank systems,'' in \emph{Proceedings of The Web
  Conference 2020}, 2020, pp. 1863--1873.

\bibitem{schnabel2016recommendations}
T.~Schnabel, A.~Swaminathan, A.~Singh, N.~Chandak, and T.~Joachims,
  ``Recommendations as treatments: Debiasing learning and evaluation,'' in
  \emph{international conference on machine learning}.\hskip 1em plus 0.5em
  minus 0.4em\relax PMLR, 2016, pp. 1670--1679.

\bibitem{wang2016learning}
X.~Wang, M.~Bendersky, D.~Metzler, and M.~Najork, ``Learning to rank with
  selection bias in personal search,'' in \emph{Proceedings of the 39th
  International ACM SIGIR conference on Research and Development in Information
  Retrieval}, 2016, pp. 115--124.

\bibitem{liang2016causal}
D.~Liang, L.~Charlin, and D.~M. Blei, ``Causal inference for recommendation,''
  in \emph{Causation: Foundation to Application, Workshop at UAI. AUAI}, 2016.

\bibitem{bonner2018causal}
S.~Bonner and F.~Vasile, ``Causal embeddings for recommendation,'' in
  \emph{Proceedings of the 12th ACM conference on recommender systems}, 2018,
  pp. 104--112.

\bibitem{wang2021deconfounded}
W.~Wang, F.~Feng, X.~He, X.~Wang, and T.-S. Chua, ``Deconfounded recommendation
  for alleviating bias amplification,'' \emph{arXiv preprint arXiv:2105.10648},
  2021.

\bibitem{wei2020model}
T.~Wei, F.~Feng, J.~Chen, C.~Shi, Z.~Wu, J.~Yi, and X.~He, ``Model-agnostic
  counterfactual reasoning for eliminating popularity bias in recommender
  system,'' \emph{arXiv preprint arXiv:2010.15363}, 2020.

\bibitem{zhang2021causal}
Y.~Zhang, F.~Feng, X.~He, T.~Wei, C.~Song, G.~Ling, and Y.~Zhang, ``Causal
  intervention for leveraging popularity bias in recommendation,'' \emph{arXiv
  preprint arXiv:2105.06067}, 2021.

\bibitem{louizos2017causal}
C.~Louizos, U.~Shalit, J.~Mooij, D.~Sontag, R.~Zemel, and M.~Welling, ``Causal
  effect inference with deep latent-variable models,'' in \emph{Proceedings of
  the 31st International Conference on Neural Information Processing Systems},
  2017, pp. 6449--6459.

\bibitem{sharma2015estimating}
A.~Sharma, J.~M. Hofman, and D.~J. Watts, ``Estimating the causal impact of
  recommendation systems from observational data,'' in \emph{Proceedings of the
  Sixteenth ACM Conference on Economics and Computation}, 2015, pp. 453--470.

\bibitem{ai2018unbiased}
Q.~Ai, K.~Bi, C.~Luo, J.~Guo, and W.~B. Croft, ``Unbiased learning to rank with
  unbiased propensity estimation,'' in \emph{The 41st International ACM SIGIR
  Conference on Research \& Development in Information Retrieval}, 2018, pp.
  385--394.

\bibitem{wang2019doubly}
X.~Wang, R.~Zhang, Y.~Sun, and J.~Qi, ``Doubly robust joint learning for
  recommendation on data missing not at random,'' in \emph{International
  Conference on Machine Learning}.\hskip 1em plus 0.5em minus 0.4em\relax PMLR,
  2019, pp. 6638--6647.

\bibitem{wang2020information}
Z.~Wang, X.~Chen, R.~Wen, S.-L. Huang, E.~E. Kuruoglu, and Y.~Zheng,
  ``Information theoretic counterfactual learning from missing-not-at-random
  feedback,'' \emph{arXiv preprint arXiv:2009.02623}, 2020.

\bibitem{xu2020adversarial}
D.~Xu, C.~Ruan, E.~Korpeoglu, S.~Kumar, and K.~Achan, ``Adversarial
  counterfactual learning and evaluation for recommender system,''
  \emph{Advances in Neural Information Processing Systems}, vol.~33, 2020.

\bibitem{ma2008sorec}
H.~Ma, H.~Yang, M.~R. Lyu, and I.~King, ``Sorec: social recommendation using
  probabilistic matrix factorization,'' in \emph{Proceedings of the 17th ACM
  conference on Information and knowledge management}, 2008, pp. 931--940.

\bibitem{yang2017social}
B.~Yang, Y.~Lei, J.~Liu, and W.~Li, ``Social collaborative filtering by
  trust,'' \emph{IEEE transactions on pattern analysis and machine
  intelligence}, vol.~39, no.~8, pp. 1633--1647, 2017.

\bibitem{deng2016deep}
S.~Deng, L.~Huang, G.~Xu, X.~Wu, and Z.~Wu, ``On deep learning for trust-aware
  recommendations in social networks,'' \emph{IEEE transactions on neural
  networks and learning systems}, vol.~28, no.~5, pp. 1164--1177, 2016.

\bibitem{fan2019deep}
W.~Fan, T.~Derr, Y.~Ma, J.~Wang, J.~Tang, and Q.~Li, ``Deep adversarial social
  recommendation,'' \emph{arXiv preprint arXiv:1905.13160}, 2019.

\bibitem{zhou2020recommendation}
F.~Zhou, Y.~Mo, G.~Trajcevski, K.~Zhang, J.~Wu, and T.~Zhong, ``Recommendation
  via collaborative autoregressive flows,'' \emph{Neural Networks}, vol. 126,
  pp. 52--64, 2020.

\bibitem{papamakarios2019normalizing}
G.~Papamakarios, E.~Nalisnick, D.~J. Rezende, S.~Mohamed, and
  B.~Lakshminarayanan, ``Normalizing flows for probabilistic modeling and
  inference,'' \emph{arXiv preprint arXiv:1912.02762}, 2019.

\bibitem{liang2018variational}
D.~Liang, R.~G. Krishnan, M.~D. Hoffman, and T.~Jebara, ``Variational
  autoencoders for collaborative filtering,'' in \emph{Proceedings of the 2018
  world wide web conference}, 2018, pp. 689--698.

\bibitem{kingma2013auto}
D.~P. Kingma and M.~Welling, ``Auto-encoding variational bayes,'' \emph{arXiv
  preprint arXiv:1312.6114}, 2013.

\bibitem{liu2020deep}
H.~Liu, L.~Jing, J.~Wen, Z.~Wu, X.~Sun, J.~Wang, L.~Xiao, and J.~Yu, ``Deep
  global and local generative model for recommendation,'' in \emph{Proceedings
  of The Web Conference 2020}, 2020, pp. 551--561.

\bibitem{yu2020enhance}
J.~Yu, H.~Yin, J.~Li, M.~Gao, Z.~Huang, and L.~Cui, ``Enhance social
  recommendation with adversarial graph convolutional networks,'' \emph{IEEE
  Transactions on Knowledge and Data Engineering}, 2020.

\bibitem{pearl2009causality}
J.~Pearl, \emph{Causality}.\hskip 1em plus 0.5em minus 0.4em\relax Cambridge
  university press, 2009.

\bibitem{velivckovic2017graph}
P.~Veli{\v{c}}kovi{\'c}, G.~Cucurull, A.~Casanova, A.~Romero, P.~Lio, and
  Y.~Bengio, ``Graph attention networks,'' \emph{arXiv preprint
  arXiv:1710.10903}, 2017.

\bibitem{shalit2017estimating}
U.~Shalit, F.~D. Johansson, and D.~Sontag, ``Estimating individual treatment
  effect: generalization bounds and algorithms,'' in \emph{International
  Conference on Machine Learning}.\hskip 1em plus 0.5em minus 0.4em\relax PMLR,
  2017, pp. 3076--3085.

\bibitem{jang2016categorical}
E.~Jang, S.~Gu, and B.~Poole, ``Categorical reparameterization with
  gumbel-softmax,'' \emph{arXiv preprint arXiv:1611.01144}, 2016.

\bibitem{kipf2016variational}
T.~N. Kipf and M.~Welling, ``Variational graph auto-encoders,'' \emph{arXiv
  preprint arXiv:1611.07308}, 2016.

\bibitem{rendle2009bpr}
S.~Rendle, C.~Freudenthaler, Z.~Gantner, and L.~Schmidt-Thieme, ``Bpr: Bayesian
  personalized ranking from implicit feedback,'' in \emph{Proceedings of the
  Twenty-Fifth Conference on Uncertainty in Artificial Intelligence}, 2009, pp.
  452--461.

\bibitem{fan2019graph}
W.~Fan, Y.~Ma, Q.~Li, Y.~He, E.~Zhao, J.~Tang, and D.~Yin, ``Graph neural
  networks for social recommendation,'' in \emph{The World Wide Web
  Conference}, 2019, pp. 417--426.

\bibitem{he2020lightgcn}
X.~He, K.~Deng, X.~Wang, Y.~Li, Y.~Zhang, and M.~Wang, ``Lightgcn: Simplifying
  and powering graph convolution network for recommendation,'' in
  \emph{Proceedings of the 43rd International ACM SIGIR Conference on Research
  and Development in Information Retrieval}, 2020, pp. 639--648.

\bibitem{wang2019neural}
X.~Wang, X.~He, M.~Wang, F.~Feng, and T.-S. Chua, ``Neural graph collaborative
  filtering,'' in \emph{Proceedings of the 42nd international ACM SIGIR
  conference on Research and development in Information Retrieval}, 2019, pp.
  165--174.

\bibitem{shenbin2020recvae}
I.~Shenbin, A.~Alekseev, E.~Tutubalina, V.~Malykh, and S.~I. Nikolenko,
  ``Recvae: A new variational autoencoder for top-n recommendations with
  implicit feedback,'' in \emph{Proceedings of the 13th International
  Conference on Web Search and Data Mining}, 2020, pp. 528--536.

\end{thebibliography}

\onecolumn
% \appendix[The Supplementary Material of 'DRESS: Debias Recommendation under Exposure Strategies and Social Networks`]
\appendix[]
% \section{Appendix}
The proofs of evidence lower bound (ELBO)
\label{supp:elbo}
\begin{equation}
\begin{split}
\ln  P\left(G, e, r, v\right)  \geq& P(v)-D_{KL}\left(Q(\bm{s}|e,r,v, u||P(\bm{s})\right) \\
- & D_{KL}\left(Q(u|G, e, r, v)||P(u)\right) \\
+ & \mathbb{E}_{Q(u|G, e, r, v)}\ln\left(P(G|u)\right)\\
+ & \mathbb{E}_{Q(u|G, e, r, v)}\mathbb{E}_{Q(\bm{s}|e,r,v, u)}\ln P(e|v, u, \bm{s})\\
+ & \mathbb{E}_{Q(u|G, e, r, v)}\ln P(r|u, v, e).
\end{split}
\nonumber
\end{equation}

\begin{proof}
	The proof of the ELBO is composed of three steps. First, we factorize the conditional distribution according to the Bayes theorem.
	\begin{equation*}
	\begin{aligned}
	&\ln P\left(G, e, r, v\right) = \ln \frac{P(G, e, r, v, u, \bm{s})}{P(u, \bm{s}|G, e, r, v)} \\
	=&\ln \frac{P(G, e, r, v, u, \bm{s})Q(u|G, e, r, v)Q(s|e, r, v, u)}{P(u|G, e, r, v)P(\bm{s}|e, r, v, u)Q(u|G, e, r, v)Q(\bm{s}|e, r, v, u)}
	\end{aligned}
	\end{equation*}
	Second, we add the expectation operator on both sides of the equation and reformalize the equation as follows:
	\begin{equation*}
	\begin{aligned}
	\ln P\left(G, e, r, v\right)=&D_{KL}\left(Q(u|G, e, r, v)||P(u|G, e, r, v)\right) +\\ & D_{KL}\left(Q(\bm{s}|e, r, v, u)||P(\bm{s}|e, r, v, u)\right) +\\
	& \mathbb{E}_{Q(u|G, e, r, v)}\mathbb{E}_{Q(\bm{s}|e, r, v, u)}\ln{\frac{P(G, e, r, v, u, \bm{s})}{Q(u|G, e, r, v)Q(\bm{s}|e, r, v, u)}}
	\end{aligned}
	\end{equation*}
	Third, we obtain the last equality with the help of $D_{KL}(\cdot||\cdot)\geq0$
	\begin{equation*}
	\begin{aligned}
	\ln P\left(G, e, r, v\right)\geq & \mathbb{E}_{Q(u|G, e, r, v)}\mathbb{E}_{Q(s|e, r, v, u)}\ln{\frac{P(G, e, r, v, u, \bm{s})}{Q(u|G, e, r, v)Q(\bm{s}|e, r, v, u)}}\\
	= & \mathbb{E}_{Q(u|G, e, r, v)}\mathbb{E}_{Q(\bm{s}|e, r, v, u)}\ln{\frac{P(u)P(G, e, r, v, \bm{s}|u)}{Q(u|G, e, r, v)Q(\bm{s}|e, r, v, u)}}\\
	= & \mathbb{E}_{Q(u|G, e, r, v)}\mathbb{E}_{Q(\bm{s}|e, r, v, u)}\ln{\frac{P(u)P(\bm{s}|u)P(G,r,e,v|u, \bm{s})}{Q(u|G, e, r, v)Q(\bm{s}|e, r, v, u)}}\\
	= & \mathbb{E}_{Q(u|G, e, r, v)}\mathbb{E}_{Q(s_{ij}|e, r_, v, u)}\ln{\frac{P(u)P(\bm{s})P(G|u)P(r,e,v|u, \bm{s})}{Q(u|G, e, r, v)Q(\bm{s}|e, r, v, u)}}\\
	= & \mathbb{E}_{Q(u|G, e, r, v)}\mathbb{E}_{Q(\bm{s}|e, r, v, u)}\ln{\frac{P(u)P(G|u)P(v)P(\bm{s})P(e|v,u,\bm{s})P(r|u, v,e,\bm{s}))}{Q(u|G, e, r, v)Q(\bm{s}|e, r, v, u)}}\\
	= & -D_{KL}\left(Q(u|G, e, r, v)||P(u)\right) - D_{KL}\left(Q(\bm{s}|e, r, v, u)||P(\bm{s})\right) + P(v) + \\ &\mathbb{E}_{Q(u|G, e, r, v)}\ln\left(P(G|u)\right)\\
	+ & \mathbb{E}_{Q(u|G, e, r, v)}\mathbb{E}_{Q(\bm{s}|e,r,v, u)}\ln P(e|v, u, \bm{s})\\
	+ & \mathbb{E}_{Q(u|G, e, r, v)}\ln P(r|u, v, e, \bm{s})
	\end{aligned}
	\end{equation*}
\end{proof}
\end{document}